\newtheorem{theorem}{Theorem}
\begin{document}

\title{Fast Sparse PCA via Positive Semidefinite\\ Projection for Unsupervised Feature Selection}

\author{Junjing Zheng,
        Xinyu Zhang*,
        Yongxiang Liu,
        Weidong Jiang,
        Kai Huo,
        Li Liu
\IEEEcompsocitemizethanks{\IEEEcompsocthanksitem \textcolor{black}{This work was supported by the National Key Research and Development Program of China No. 2021YFB3100800, the
National Science Foundation of China under Grants 61025006, 60872134, 62376283 and
61901482 and the China Postdoctoral Science Foundation under Grant
2018M633667. The authors are with the College of Electronic Science and Technology,} National University of Defense Technology, Changsha 410073,
China. 
Emails: \{zjj20212035@163.com;zhangxinyu90111@163.com; huokai2001@163.com; jwd2232@vip.163.com;liuli\_nudt@nudt.edu.cn\}
\IEEEcompsocthanksitem 
Corresponding author: Xinyu Zhang}
}

\markboth{Submitted to IEEE Transactions on Pattern Analysis and Machine Intelligence}%
{Zheng \MakeLowercase{\textit{et al.}}: Fast Sparse PCA via Positive Semidefinite Projection for Unsupervised Feature Selection}

\IEEEtitleabstractindextext{%
\begin{abstract}
 In the field of unsupervised feature selection, sparse principal component analysis (SPCA) methods have attracted more and more attention recently. Compared to spectral-based methods, SPCA methods don't rely on the construction of a similarity matrix and show better feature selection ability on real-world data. The original SPCA formulates a nonconvex optimization problem. Existing convex SPCA methods reformulate SPCA as a convex model by regarding the reconstruction matrix as an optimization variable. However, they are lack of constraints equivalent to the orthogonality restriction in SPCA, leading to 
larger solution space. In this paper, it's proved that the optimal solution to a convex SPCA model falls onto the Positive Semidefinite (PSD) cone. A standard convex SPCA-based model with PSD constraint for unsupervised feature selection is proposed. Further, a two-step fast optimization algorithm via PSD projection is presented to solve the proposed model. Two other existing convex SPCA-based models are also proven to have their solutions optimized on the PSD cone in this paper. Therefore, the PSD versions of these two models are proposed to accelerate their convergence as well. We also provide a regularization parameter setting strategy for our proposed method. Experiments on synthetic and real-world datasets demonstrate the effectiveness and efficiency of the proposed methods.
\end{abstract}

\begin{IEEEkeywords}
Unsupervised Feature Selection, Sparse Principal Component Analysis, Positive Semidefinite Projection, $\ell_{2,1}$-norm, Nuclear Norm.
\end{IEEEkeywords}}

\maketitle

\IEEEdisplaynontitleabstractindextext

%
\IEEEpeerreviewmaketitle

\ifCLASSOPTIONcompsoc
\IEEEraisesectionheading{\section{Introduction}\label{sec:introduction}}
\else
\section{Introduction}
\label{sec:introduction}
\fi
\IEEEPARstart{F}{e}ature Selection (FS)  is the process of automatically selecting a small subset of relevant features (variables) for an interested learning task \cite{duda2006pattern,A_book_about_FS}. It has been a longstanding and essential research area in machine learning and data mining in the past decades and has a wide range of realistic applications (especially those involving high-dimensional features but comparatively fewer data points~\cite{bioinformatics_FS}), such as image processing \cite{image}, action recognition \cite{Neu}, bioinformatics~\cite{bioinformatics}, \emph{etc}. The central premise underlying FS is that data features contain redundant or irrelevant features that can thus be discarded without resulting in significant information loss \cite{Review_UFS_2}. The benefits of applying FS \cite{Review_UFS} include compact features, computational efficiency, mitigating the problem of overfitting, possibly better performance, facilitating data visualization and interpretation, \emph{etc}. According to the amounts of supervised information required for the target task during training, FS methods can be classified as supervised \cite{supervised-FS}, semisupervised \cite{Semi-supverised-FS} and unsupervised \cite{unsupervised-FS}. Among them, supervised FS has been extensively studied. However, it requires the training data to be fully labeled, which raises serious limitations for some practical applications. Therefore, Unsupervised Feature Selection (UFS) \cite{IFS,SPCAFS,tensor-FS} without the need for any human annotations has gained increasing attention in recent years.


UFS methods can be classified into three categories, namely filter, wrapper, and embedded methods~\cite{FS-categories}. Wrapper methods \cite{wrapper} evaluate the relevance of a feature subset through actually learning a pretext task (\emph{e.g.}, clustering) with it and search for the optimal feature subset with the best performance. Wrapper methods can find good feature subsets but are computationally expensive. By contrast, filter methods \cite{filter_1,filter_2} often exploit data intrinsic properties as criteria to directly score each feature without repeated iterations and are usually more efficient than wrapper methods. However, filter methods usually have relatively poor performance. Embedded methods simultaneously perform feature selection and learning algorithms within one optimization problem. Thanks to the embedding idea, embedded methods have the natural advantage of utilizing various machine learning principles. Therefore, embedded methods are more popular than the other two categories because of their effectiveness and rather low computational complexity. 
The most widely-studied embedded methods are spectral-based methods \cite{LS, UDFS, NDFS, JELSR, OCLSP, SOGFS}, which employ graph techniques to describe data local structure and utilize spectral analysis tools such as spectral regression to learn a feature weight matrix for FS. In recent years, Sparse Principal Component Analysis (SPCA) \cite{SPCA} has become \cite{CSPCA, AW-SPCA, SPCAFS} a new technique for Embedded UFS (EUFS) due to its advantages including simplicity yet elegance, relatively computational efficiency, inheriting the global manifold learning property of PCA \cite{PCA}, and less sensitive to noise features than spectral methods.
In this paper, we are interested in Embedded UFS (EUFS).

However, EUFS methods still have the following limitations. Firstly, most EUFS methods focus more on the local manifold structure of data with the construction of similarity matrix \cite{LS, NDFS, UDFS, SOGFS, DCUFS, REM-FS}, while the global manifold structure is not fully exploited. This inevitably leads to sensitivity to noise features and outlier data. 
Secondly, many EUFS methods construct a non-convex optimization problem with a series of subproblems to solve in each iteration \cite{OCLSP, JELSR, DRSI-FS,DHLFS}. Considering the computational complexity of them is usually cubic to the number of samples (or the number of features), the running time of solving the whole problem can sometimes be unacceptable. 
Therefore, it is important to propose an effective and efficient convex EUFS model with as few subproblems as possible. Finally, almost all EUFS methods lack a regularization parameter-setting strategy to guide practice. For the third problem, to the best of our knowledge, there are no UFS methods that clearly provide a regularization parameter-setting strategy. Most papers only vaguely analyze the sensitivity of the proposed method to each parameter without telling how to select a proper combination of parameters 
 \cite{CSPCA, AW-SPCA, SPCAFS, SOGFS, WPANFS}. More time is required for these methods in parameter tuning.

In conclusion, The ideal EUFS method to address the above problems should 1) provides good preservation of global data structure; 2) 
has a simple model which includes as few subproblems as possible, avoiding iterations inside iterations; 3) has a convex optimization problem so that good convergence is guaranteed; 4) has a computational complexity that is influenced by the number of samples as small as possible in each iteration. 5) provides a regularization parameter setting strategy. To the best of our knowledge, convex SPCA-based EUFS methods are the most proper choice to meet the above requirements. To this day, there are already several convex SPCA models designed for UFS\cite{CSPCA}\cite{AW-SPCA}. They outperform other EUFS methods according to the experimental results. But as we will see later, they are for now not fast enough due to the large solution space they construct. To this end, we propose a standard convex SPCA-based EUFS method, and theoretically prove that for existing convex SPCA-based EUFS methods, the true solution space is the positive semidefinite cone (PSD cone). With PSD projection, we come out with a fast optimization algorithm for our proposed model, and successfully accelerate other convex SPCA-based EUFS methods while maintaining their FS ability. Our contributions can be summarized as follows:
\begin{itemize}[leftmargin=*]
    \item A standard convex SPCA optimization problem for UFS is proposed. We reformulate SPCA as a convex model by regarding the reconstruction matrix as an optimization variable. A $\ell_{2,1}$-norm and a nuclear norm are then incorporated into the model to make the reconstruction matrix sparse and low-rank respectively. We prove that the optimal solution of the reconstruction matrix falls onto the PSD cone, and add a PSD constraint to the optimization problem. We name the proposed method \emph{SPCA-PSD}.
    \item A two-step fast optimization algorithm is presented to solve the proposed model.  Due to the PSD constraint, the nuclear norm is simplified into a trace function term of the reconstruction matrix. And the algorithm updates the reconstruction matrix by first utilizing the derivative of the objective function and then performing PSD projection. We also give a convergence analysis on the algorithm. The experimental results on both synthetic and real-world data show that the proposed algorithm converges fast with the PSD projection and provides good feature selection performance.
    \item We prove that for other existing convex SPCA-based EUFS models \emph{CSPCA}\cite{CSPCA} and \emph{AW-SPCA}\cite{AW-SPCA}, the PSD constraint still holds. So we add the PSD projection into the original algorithms of them and propose their PSD versions: \emph{CSPCA-PSD} and \emph{AW-SPCA-PSD}. Several experiments using real-world data are conducted to prove the enhancement that the PSD projection achieves. 
    \item We provide a regularization parameter-setting strategy for SPCA-PSD by analyzing the experimental results. With this strategy, one can easily tune the parameters to obtain both effectiveness and efficiency. To the best of our knowledge, it is the first time for a UFS method to propose a clear instruction on how to choose the optimal combination of regularization parameters.
\end{itemize}

The rest of the paper is organized as follows. In Section 2, we introduce the related work. In Section 3, we propose a standard convex SPCA-based EUFS method and put forward a two-step fast optimization algorithm to solve the proposed model, In Section 4, we give the convergence analysis and the computational complexity of SPCA-PSD, and discuss the relationship between SPCA-PSD and other SPCA-based UFS methods. Then, we came up with two more proposed methods: CSPCA-PSD and AW-SPCA-PSD. In Section 5, we conduct experiments on both synthetic and real-world data to evaluate the effectiveness and efficiency of the proposed method. In Section 6, we make a conclusion. In \emph{Appendix A}, we provide proof of how PSD constraint holds in CSPCA and AW-SPCA.

\section{Related work}
In this section, after describing some notations used in this paper, we first have a review of spectral-based UFS methods, discussing both their achievements and drawbacks, then we describe the basic principle of PCA and SPCA, and further introduce the new type of EUFS methods based on SPCA. 

\subsection{Notations}
We first introduce notations that will be used throughout this paper. We use bold capital characters to denote matrices, bold lowercase characters to denote vectors. We denote symmetric cone as $S^d$ and positive semidefinite cone as $S_+^d$, $d\in\mathbb{N_+}$. Given an arbitrary matrix $\mathbf{A}\in\mathbb{R}^{m\times{n}}$ ($m$ and $n$ represent the number of rows and columns respectively), the $(i,j)$-th entry of $\mathbf{A}$ is denoted by $a_{ij}$, its $i$-th row vector and $j$-th column vector are denoted by $\mathbf{a}^i$, $\mathbf{a}_j$, respectively. The transposition of $\mathbf{A}$ is denoted by $\mathbf{A}^T$. If $\mathbf{A}$ is a square matrix, its trace is denoted by $Tr(\mathbf{A})$, and a vector that contains the diagonal elements of $\mathbf{A}$ is denoted by $diag(\mathbf{A})$. We denote the rank of $\mathbf{A}$ as $rank(\mathbf{A})$. For two matrices $\mathbf{A}$ and $\mathbf{B}$ that have the same size, we denote their Hadamard product by $\mathbf{A}*\mathbf{B}$. Finally, the nuclear norm and $\ell_{2,1}$-norm are defined as follows respectively
\begin{equation}
    \Vert \mathbf{A} \Vert_*=\sum\limits_{j=1}^{n}\phi_j,
\end{equation}
\begin{equation}
    \Vert \mathbf{A}\Vert_{2,1}=\sum\limits_{j=1}^{n}\sqrt{\sum\limits_{i=1}^{m}a^2_{ij}},
\end{equation}
where $\phi_j$ denotes the $i$-th singular value of $\mathbf{A}$.
It’s important to note that the definition of $\ell_{2,1}$-norm in this paper is opposite to those in most of the papers, for the convenience of deduction. We calculate $\ell_{2}$-norm of the column vectors in a matrix, then sum them to obtain its $\ell_{2,1}$-norm.

\subsection{Spectral-based Embedded Unsupervised Feature Selection Method}
There are two major characteristics shared by most spectral-based EUFS method : 1) calculation of a similarity matrix, 2) low-dimensional representation, which enable them to learn the local manifold structure of data and the correlation between features. However, to some degree, the comprehensive performance of spectral-based EUFS methods are limited by the above two properties. 

The calculation of a similarity matrix is realized by first constructing a graph. And spectral-based methods can be roughly divided into two types\cite{OCLSP}: one to use a predefined graph, while the other learn an adaptive graph during the optimization. 
\begin{itemize}[leftmargin=*]
\item \textbf{Predefined graph}. For predefined-graph-based methods such as LapScore\cite{LS}, MCFS\cite{MCFS}, NDFS\cite{NDFS}, JELSR\cite{JELSR}, RUFS\cite{RUFS}, RSFS\cite{RSFS}, etc., after constructing a k nearest neighborhood (knn) graph that connects each sample with its neighbors, a similarity matrix, which utilizes certain criterion to measure pairwise sample similarity, will be calculated and fixed during the optimization. The knn graph is expected to connect samples that belong to the same class and represent the local geometrical structure of data. However, real-world data tends to contain noise features, which can misguide the graph to come out with false connections. Therefore the optimization may be based on an unreliable foundation and ends up with a sub-optimal feature subset. 
\item \textbf{Adaptive graph}. Methods like SOGFS\cite{SOGFS}, DCUFS\cite{DCUFS}, WPANFS\cite{WPANFS}, FSASL\cite{FSASL} and OCLSP\cite{OCLSP}, etc., treat the similarity matrix as a variable to obtain an adaptive graph. They outperform earlier spectral-based methods on some data sets due to the flexibility. The price, however, is that the computational complexity of each iteration is related to the square (sometimes even cubic) of the number of samples. When facing  big data set, these methods could cost much running time. 
\end{itemize}

Although most spectral-based methods may use transformation matrix to project each sample into a low-dimensional space\cite{SOGFS}\cite{NDFS}\cite{OCLSP}, they usually ignore whether the obtained transformation matrix can reconstruct the data matrix from the subspace properly or not. As a result, they don't fully preserve the global manifold structure of data, losing some significant information. Recently, there is a growing interest in treating the reconstruction error as a criterion  in feature selection task\cite{REFS}\cite{RNE}\cite{CSPCA}\cite{SPCAFS}, including SPCA-based methods. 

\subsection{SPCA-based Embedded Unsupervised Feature Selection Method}
\subsubsection{PCA and SPCA}
Principal Component Analysis (PCA)\cite{PCA} is a well-known unsupervised feature extraction method that has been studied successively for decades \cite{SPCA, RPCA, OMRPCA, MSPCA}. Its core idea is to find an orthogonal transformation matrix that can perform a linear combination of the original features, projecting them into a new feature space. Part of the features in this new space capture maximal variance among samples \cite{MPCA}. In this way, these features are called "principal components (PCs) " as they preserve most information of the original data.

Given a data matrix $\mathbf{X}=\left[\mathbf{x}_1,\mathbf{x}_2,\mathbf{x}_3,\cdots,\mathbf{x}_n\right]\in\mathbb{R}^{d\times{n}}$ , where each column vector $\mathbf{x}_i\in\mathbb{R}^{d\times{1}}$ represents a sample with $d$ features. Here we assume that $\mathbf{X}$ has been centralized, which means each $\mathbf{x}_i$ is replaced by $\mathbf{x}_i-\frac{1}{n}\sum\limits_{i=1}^{n}\mathbf{x}_i$ so that the center of $\mathbf{X}$ will be moved to the zero point. Under the above assumption, PCA can be conducted by solving  
\begin{equation}
    \min\limits_{\mathbf{U}^T\mathbf{U}=\mathbf{I}_k} {\Vert{\mathbf{X}-\mathbf{U}\mathbf{U}^{T}\mathbf{X}}\Vert}^2_F. \label{PCA}
\end{equation}
where $\mathbf{I}_k$ is an $k$-dimensional identity matrix, $\mathbf{U}\in\mathbb{R}^{d\times{k}}$ ($k \ll d$) is the transformation matrix that contains $k$ the combination weights (so-called 'loadings') of the original features. $\mathbf{U}^T$ projects each $\mathbf{x}_i$ into a $k$-dimensional feature space, while $\mathbf{U}$ projects them back to a $d$-dimensional space. There we can view $\mathbf{U}\mathbf{U}^T\mathbf{X}$ as a reconstruction of $\mathbf{X}$. Since Problem (\ref{PCA}) aims to minimize the reconstruction error, it ensures that the learned transformation matrix $\mathbf{U}$ can preserve as much \emph{global information} of $\mathbf{X}$ as possible. Thus, PCA is a useful global manifold structure learning method.

The motivation for modifying PCA into \emph{SPCA} lies in a drawback of PCA \cite{SPCA}: Although $\mathbf{U}^T\mathbf{X}$ reduces the dimension of each $\mathbf{x}_i$, the loadings in $\mathbf{U}^T$ are typically nonzero. When the combination weights of noise features are not small enough to be ignored, yet they still take part in forming the new feature space, then both interpretability and performance are not guaranteed. To solve this problem, researchers sought a way to produce sparse combination weights. As is widely known, LASSO \cite{lasso} is a penalized least squares regression model that imposes a $\ell_{1}$-norm (referred to as LASSO penalty) on regression coefficients to gain sparsity in the linear regression model. And Elastic Net \cite{elastic_net} is proposed as an improved version of LASSO by adding ridge penalty ($\ell_{2}$-norm) to overcome the limitation of being unsuitable to data where $d\gg{n}$. In order to obtain sparse loadings for PCA, SPCA \cite{SPCA} was proposed by formulating a self-contained type of Elastic Net, formulated as follows:
\begin{equation}
\begin{aligned} \label{SPCA}
    \min\limits_{\mathbf{U},\mathbf{Q}} \quad &{\Vert {\mathbf{X}-\mathbf{U}\mathbf{Q}^T\mathbf{X}} \Vert}^2_F+\alpha\sum\limits_{j=1}^k{\Vert {\mathbf{q}_j}\Vert}^2_2+\beta\sum\limits_{j=1}^k{\Vert {\mathbf{q}_j}\Vert}_1\\
    s.t. \quad&{\mathbf{U}^T\mathbf{U}=\mathbf{I}_k},
\end{aligned}
\end{equation}
where $\mathbf{Q}\in\mathbb{R}^{d\times{k}}$ is the regression coefficients matrix, ${\mathbf{q}_j}$ represents the $j$-th column vector of $\mathbf{Q} (j=1,2,\cdots,k)$, and $\alpha$, $\beta$, are both regularization parameters. Note that the original Elastic Net measures least squares after $\mathbf{X}$ is transformed (for example, $\Vert\mathbf{Z}-\mathbf{Q}^T\mathbf{X} \Vert^2_F$, where $\mathbf{Z}\in\mathbb{R}^{k\times{n}}$ ). So the description "self-contained" means that Problem (\ref{SPCA}) measures least squares in the original feature space of $\mathbf{X}$, which is just a different angle of explaining "reconstruction". Thus, $\mathbf{Q}$ can now treated as a transformation matrix. In this way, Problem (\ref{SPCA}) builds a bridge between Problem (\ref{PCA}) and Elastic Net. In fact, if we restrict $\mathbf{Q}=\mathbf{U}$ and set $\alpha=\beta=0$, then Problem (\ref{SPCA}) will be degenerated into Problem (\ref{PCA}), which means SPCA can be viewed as a relaxation of PCA. 

Interestingly, SPCA not only constructs a bridge between PCA and Elastic Net, but also enables PCA to conduct unsupervised feature selection. The self-contained term is able to fully learn the global manifold structure of data by minimizing the reconstruction error. The LASSO penalty creates the precondition of feature selection because, during the process of optimization, the learned transformation matrix is automatically forced to have sparse loadings, implying the importance of each feature that takes part in the reconstruction. And obviously, SPCA has no need to construct a similarity matrix that is sensitive to noise feature. 

\subsubsection{Existing SPCA-based EUFS methods}
With the introduction of PCA and SPCA above, it is natural to consider proposing SPCA-based EUFS methods. To the best of our knowledge, there exists three SPCA-based EUFS methods for now: SPCAFS\cite{SPCAFS}, CSPCA\cite{CSPCA} and AW-SPCA\cite{AW-SPCA}.  SPCAFS\cite{SPCAFS} directly incorporates $\ell_{2,p}$-norm of the projection matrix into the original PCA problem (\ref{PCA}) and obtains a rather simple model. Although SPCAFS is simple enough, it possesses a non-convex optimization problem and has a possibility of converging to a sub-optimal solution.  Both CSPCA and AW-SPCA exploit the idea of regarding $\mathbf{U}\mathbf{Q}^T$ as a new optimization variable called 'reconstruction matrix' to formulate convex SPCA models, obtaining stable convergence and good feature selection ability. However, they both ignore the original orthogonality constraint of $\mathbf{U}$. So they come out with an unconstrained optimization problem and design an optimization algorithm using only the derivation of the objective function, resulting in more iterations to search for the optimal solution. Sometimes the incorrect solution space may lead to performance penalty. In this paper, we also reformulate the original SPCA as a convex model by regarding the reconstruction matrix as an optimization variable. And we prove that the optimal solution falls onto the positive semidefinite (PSD) cone. We also prove that the PSD constraint holds for CSPCA and AW-SPCA as well. By finding the true solution space, we are able put forward a fast SPCA-based UFS method and accelerate existing convex SPCA-based methods.

\section{Standard Convex SPCA-based Model for Unsupervised Feature Selection}
In this section, we first construct a standard convex SPCA-based model for unsupervised feature selection. Then we propose an algorithm via derivative and PSD projection. We name the whole method \emph{SPCA-PSD}.
\subsection{Model}
In the field of feature selection, $\ell_{2,1}$-norm is usually used to constrain the sparsity of a matrix due to its convenient optimization. Since the sum of each vector's $\ell_{2}$-norm is calculated in $\ell_{2,1}$-norm, it can also to some degree achieve the same goal of the $\ell_{2}$-norm in SPCA. Therefore, it can reduce the number of regularization parameters, which is beneficial to parameter tuning. By replacing the regularization terms in (\ref{SPCA}) with $\Vert \mathbf{Q}^T \Vert_{2,1}$, we obtain the following problem:
\begin{equation}
    \begin{aligned} \label{Problem 6}
    \min\limits_{\mathbf{U},\mathbf{Q}} \quad &{\Vert {\mathbf{X}-\mathbf{U}\mathbf{Q}^T\mathbf{X}} \Vert}^2_F+\lambda{\Vert \mathbf{Q}^T \Vert}_{2,1}\\
    s.t. \quad&{\mathbf{U}^T\mathbf{U}=\mathbf{I}_k},
\end{aligned}
\end{equation}
where $\lambda>0$.

In \cite{SPCA}, it has been proven that if given the eigenvalue decomposition  $\mathbf{X}\mathbf{X}^T\left(\mathbf{X}\mathbf{X}^T+\lambda\right)^{-1}\mathbf{X}\mathbf{X}^T=\mathbf{V}\mathbf{\Sigma}\mathbf{V}^T$, then $\mathbf{q}_j^*\propto\mathbf{v}_j$ in Problem (\ref{SPCA}). Specifically, we have $\mathbf{q}_j^*=s_j\frac{\sigma^2_{jj}}{\sigma^2_{jj}+\alpha}\mathbf{v}_j$, ($s_j=1$ or $-1$), and $\mathbf{u}_j=s_j\mathbf{v}_j$. It can be further inferred that $\mathbf{U}^*(\mathbf{Q}^*)^T=\mathbf{V}\mathbf{D}\mathbf{V}^T$, where $d_{jj}=\frac{\sigma^2_{jj}}{\sigma^2_{jj}+\alpha} \geq 0$. Thus, $\mathbf{U}^*(\mathbf{Q}^*)^T\in{S^d_+}$. In this paper, we argue that the conclusion of  $\mathbf{U}^*(\mathbf{Q}^*)^T\in{S^d_+}$ still holds in Problem (\ref{Problem 6}). 
\begin{theorem} \label{theorem 1}
Let $\mathbf{U}^*$ and $\mathbf{Q}^*$ be the optimal solution to Problem (\ref{Problem 6}), Then 
$ \mathbf{U}^*(\mathbf{Q}^*)^T\in{S^d_+} $
\end{theorem}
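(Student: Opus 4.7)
The plan is to emulate the proof strategy from the original SPCA paper \cite{SPCA} and derive an explicit spectral description of an optimal $(\mathbf{U}^*,\mathbf{Q}^*)$ that makes the PSD conclusion transparent. First I would write down the two first-order optimality conditions of Problem (\ref{Problem 6}). Fixing $\mathbf{U}$, the $\mathbf{Q}$-subproblem is convex, and using the identity $\partial\|\mathbf{Q}^T\|_{2,1}=\mathbf{D}_0\mathbf{Q}$ where $\mathbf{D}_0$ is the diagonal matrix whose $i$-th entry is $1/\|\mathbf{q}^{*i}\|_2$ on nonzero rows, the subgradient condition reads $(\mathbf{X}\mathbf{X}^T+\tfrac{\lambda}{2}\mathbf{D}_0)\mathbf{Q}^*=\mathbf{X}\mathbf{X}^T\mathbf{U}^*$. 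Fixing $\mathbf{Q}$, the $\mathbf{U}$-subproblem is an orthogonal Procrustes problem on the Stiefel manifold whose maximizer is $\mathbf{U}^*=\mathbf{P}_1\mathbf{P}_2^T$, read off from the compact SVD $\mathbf{X}\mathbf{X}^T\mathbf{Q}^*=\mathbf{P}_1\mathbf{\Sigma}_1\mathbf{P}_2^T$.

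Next I would combine these into a single spectral identity. Substituting the closed form of $\mathbf{Q}^*$ into $\mathbf{X}\mathbf{X}^T\mathbf{Q}^*$ yields $\mathbf{X}\mathbf{X}^T\mathbf{Q}^*=\mathbf{K}\mathbf{U}^*$ with $\mathbf{K}:=\mathbf{X}\mathbf{X}^T(\mathbf{X}\mathbf{X}^T+\tfrac{\lambda}{2}\mathbf{D}_0)^{-1}\mathbf{X}\mathbf{X}^T\in S^d_+$, the symmetry and positive semidefiniteness being immediate because $\mathbf{D}_0$ is diagonal. The Procrustes relation rewrites the same quantity as $\mathbf{X}\mathbf{X}^T\mathbf{Q}^*=\mathbf{U}^*\mathbf{M}$ with $\mathbf{M}=\mathbf{P}_2\mathbf{\Sigma}_1\mathbf{P}_2^T\in S^k_+$, so $\mathbf{K}\mathbf{U}^*=\mathbf{U}^*\mathbf{M}$, meaning the columns of $\mathbf{U}^*$ span an invariant subspace of $\mathbf{K}$ on which $\mathbf{K}$ acts as the symmetric PSD matrix $\mathbf{M}$. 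I would then invoke the gauge symmetry $(\mathbf{U}^*,\mathbf{Q}^*)\mapsto(\mathbf{U}^*\mathbf{R},\mathbf{Q}^*\mathbf{R})$ for $\mathbf{R}\in O(k)$, which leaves the constraint $\mathbf{U}^T\mathbf{U}=\mathbf{I}_k$, the product $\mathbf{U}^*(\mathbf{Q}^*)^T$, and the value $\|\mathbf{Q}^T\|_{2,1}=\sum_i\|\mathbf{q}^i\|_2=\sum_i\|\mathbf{R}^T\mathbf{q}^i\|_2$ all invariant, and pick $\mathbf{R}=\mathbf{P}_2$ so that after this change of gauge $\mathbf{M}$ becomes the diagonal nonnegative matrix $\mathbf{\Sigma}_1$ and the columns of $\mathbf{U}^*$ are eigenvectors of $\mathbf{K}$ with nonnegative eigenvalues.

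The final step would solve $\mathbf{X}\mathbf{X}^T\mathbf{Q}^*=\mathbf{U}^*\mathbf{\Sigma}_1$ as $\mathbf{Q}^*=(\mathbf{X}\mathbf{X}^T)^{\dagger}\mathbf{U}^*\mathbf{\Sigma}_1$, substitute to get $\mathbf{U}^*(\mathbf{Q}^*)^T=\mathbf{U}^*\mathbf{\Sigma}_1(\mathbf{U}^*)^T(\mathbf{X}\mathbf{X}^T)^{\dagger}$, and argue that on the invariant subspace selected by the optimum $\mathbf{U}^*$ also diagonalizes $\mathbf{X}\mathbf{X}^T$, so the expression collapses to $\sum_j c_j\mathbf{v}_j\mathbf{v}_j^T$ with $c_j\ge 0$, giving $\mathbf{U}^*(\mathbf{Q}^*)^T\in S^d_+$. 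The hard part will be precisely this simultaneous-diagonalization claim: in \cite{SPCA} the pure ridge penalty makes $\mathbf{D}_0$ a scalar multiple of $\mathbf{I}$ so $\mathbf{K}$ commutes with $\mathbf{X}\mathbf{X}^T$ automatically, whereas in the $\ell_{2,1}$ setting $\mathbf{D}_0$ is a generic diagonal matrix coupled to $\mathbf{Q}^*$ through the row-norm fixed-point condition, and showing that the invariant subspace selected by the optimum is simultaneously invariant under $\mathbf{X}\mathbf{X}^T$ is the delicate fixed-point step that will carry most of the work.
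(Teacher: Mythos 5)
Your plan correctly reproduces the scaffolding of the argument (stationarity in $\mathbf{Q}$, Procrustes in $\mathbf{U}$, the identity $\mathbf{K}\mathbf{U}^*=\mathbf{U}^*\mathbf{M}$ with $\mathbf{K}\in S^d_+$, and the $O(k)$ gauge fixing), but it stops exactly at the step that carries the entire theorem: showing that the $\mathbf{K}$-invariant subspace spanned by $\mathbf{U}^*$ is also invariant under $\mathbf{X}\mathbf{X}^T$. Without that, the expression you arrive at, which from stationarity is $\mathbf{U}^*(\mathbf{Q}^*)^T=\mathbf{U}^*(\mathbf{U}^*)^T\mathbf{X}\mathbf{X}^T\bigl(\mathbf{X}\mathbf{X}^T+\tfrac{\lambda}{2}\mathbf{D}_0\bigr)^{-1}$, is not even symmetric in general, let alone PSD. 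You flag this yourself as "the delicate fixed-point step that will carry most of the work," but that is precisely the content of the theorem, so as written the proposal is a proof outline with the decisive lemma missing. And the obstruction is real, not cosmetic: with a row-wise weight $\mathbf{D}_0=\mathrm{diag}\bigl(1/\Vert\mathbf{q}^{*i}\Vert_2\bigr)$ the matrix $\mathbf{K}$ has no reason to commute with $\mathbf{X}\mathbf{X}^T$, so eigenvectors of $\mathbf{K}$ are generically not eigenvectors of $\mathbf{X}\mathbf{X}^T$ and the collapse to $\sum_j c_j\mathbf{v}_j\mathbf{v}_j^T$ does not follow.

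The paper avoids this difficulty entirely by reading the penalty the other way: in its proof, $\Vert\mathbf{Q}^T\Vert_{2,1}$ is expanded as $\sum_{j=1}^{k}\Vert\mathbf{q}_j\Vert_2$ over the \emph{columns} of $\mathbf{Q}$ (one term per loading vector), and the norms $\Vert\mathbf{q}_j\Vert_2$ are frozen at their previous-iteration values in an IRLS fashion. Each subproblem then carries a penalty $\tfrac{\lambda}{\Vert\mathbf{q}_j\Vert_2}\mathbf{I}$ that is a \emph{scalar} multiple of the identity, so the matrix $\mathbf{X}\mathbf{X}^T\bigl(\mathbf{X}\mathbf{X}^T+\tfrac{\lambda}{\Vert\mathbf{q}_j\Vert_2}\mathbf{I}\bigr)^{-1}\mathbf{X}\mathbf{X}^T$ is automatically co-diagonal with $\mathbf{X}\mathbf{X}^T$ for every $j$; one gets $\mathbf{u}_j^*=s_j\mathbf{v}_j$ and $\mathbf{q}_j^*=s_j d_{jj}\mathbf{v}_j$ with $d_{jj}=\sigma^2_{jj}/(\sigma^2_{jj}+\lambda/\Vert\mathbf{q}_j\Vert_2)\ge 0$ in a single fixed eigenbasis $\mathbf{V}$, and $\mathbf{U}^*(\mathbf{Q}^*)^T=\mathbf{V}\mathbf{D}\mathbf{V}^T\in S^d_+$ is immediate, with no simultaneous-diagonalization argument needed. (Note also that your gauge symmetry $\mathbf{Q}\mapsto\mathbf{Q}\mathbf{R}$ preserves the row-wise penalty but \emph{not} the column-wise one the paper actually uses, so the two readings genuinely lead to different proofs.) To complete your route you would either have to adopt the paper's column-wise reading of the regularizer, which dissolves the hard step, or supply an actual argument for the simultaneous-invariance claim under the row-wise reading, which you have not done and which does not hold for generic diagonal $\mathbf{D}_0$.
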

\begin{proof}
We can rewrite Problem (\ref{Problem 6}) as 
\begin{equation} \label{Problem 7}
\begin{aligned}
 &\min\quad{\left\Vert {\mathbf{X}-\sum\limits_{j=1}^{k}\mathbf{u}_j\mathbf{q}_j^T\mathbf{X}}\right\Vert}^2_F+\lambda\sum\limits_{j=1}^{k}{\Vert {\mathbf{q}_j}\Vert}_2\\
&s.t.\quad \mathbf{u}_j^T\mathbf{u}_j=1.
\end{aligned}
\end{equation}
Then the objective function can be expanded as 
\begin{equation} \label{Problem 8}
\begin{aligned}
&{\left\Vert {\mathbf{X}-\sum\limits_{j=1}^{k}\mathbf{u}_j\mathbf{q}_j^T\mathbf{X}}\right\Vert}^2_F+\lambda\sum\limits_{j=1}^{k}{\Vert {\mathbf{q}_j}\Vert}_2\\
&=Tr(\mathbf{X}\mathbf{X}^T)-\sum\limits_{j=1}^{k}\bigg[2Tr(\mathbf{u}_j^T\mathbf{X}\mathbf{X}^T\mathbf{q}_j)-Tr(\mathbf{q}_j^T\mathbf{X}\mathbf{X}^T\mathbf{q}_j)\bigg.\\
&-\left.\frac{\lambda}{\Vert \mathbf{q}_j\Vert_2}\mathbf{q}_j^T\mathbf{q}_j\right]\\
&=Tr(\mathbf{X}\mathbf{X}^T)-\sum\limits_{j=1}^{k}\bigg[2(\mathbf{u}_j^T\mathbf{X}\mathbf{X}^T\mathbf{q}_j)-(\mathbf{q}_j^T\mathbf{X}\mathbf{X}^T\mathbf{q}_j)\bigg.\\
&+\left.\frac{\lambda}{\Vert \mathbf{q}_j\Vert_2})\mathbf{q}_j^T\mathbf{q}_j\right].
\end{aligned}
\end{equation}
If we view (\ref{Problem 8}) as a sum of $k$ subproblems with respect to $\mathbf{q}_j$ and $\mathbf{u}_j$, then given a fixed $\mathbf{u}_j$, we can have each subproblem minimized at
\begin{equation} \label{Problem 9}
   \mathbf{q}_j^*=\left(\mathbf{X}\mathbf{X}^T+\frac{\lambda}{\Vert \mathbf{q}_j\Vert_2}\right)^{-1}\mathbf{X}\mathbf{X}^T\mathbf{u}_j.
\end{equation}
Although the expression of $\mathbf{q}_j^*$ contains ${\Vert \mathbf{q}_j\Vert_2}$, we can update $\mathbf{q}_j$ by using ${\Vert \mathbf{q}_j\Vert_2}$ from the last iteration during optimizing process. Therefore, $\frac{\lambda}{\Vert \mathbf{q}_j\Vert_2}$ is a constant in each iteration.

By substituting (\ref{Problem 9}) back to each subproblem in (\ref{Problem 8}), we can obtain the following equation:
\begin{equation}
    \mathbf{u}_j^*={\underset{\mathbf{u}_j^T\mathbf{u}_j=1}{\arg\min}}\mathbf{u}^T\mathbf{X}\mathbf{X}^T\left(\mathbf{X}\mathbf{X}^T+\frac{\lambda}{\Vert \mathbf{q}_j \Vert_2}\right)^{-1}\mathbf{X}\mathbf{X}^T\mathbf{u}_j.
\end{equation}\par
It can be solved by performing an eigenvalue decomposition: $\mathbf{X}\mathbf{X}^T\left(\mathbf{X}\mathbf{X}^T+\frac{\lambda}{\Vert \mathbf{q}_j \Vert_2}\right)^{-1}\mathbf{X}\mathbf{X}^T=\mathbf{V}\mathbf{\Sigma}\mathbf{V}^T$. Hence $\mathbf{u}_j^*=s_j\mathbf{v}_j$ with $s_j=1$ or $-1$. Then, we obtain $\mathbf{q}_j^*=s_j\frac{\sigma^2_{jj}}{\sigma^2_{jj}+{\lambda}/{\Vert \mathbf{q}_j\Vert_2}}\mathbf{v}_j$. Finally, we have
\begin{equation}\label{Cls 1}
    \mathbf{U}^*{(\mathbf{Q}^*)}^T=\sum\limits_{j=1}^{k}\mathbf{u}_j^*(\mathbf{q}^*)^T_j=\mathbf{V}\mathbf{D}\mathbf{V}^T\in{S^d_+},
\end{equation}
where $d_{jj}=\frac{\sigma^2_{jj}}{\sigma^2_{jj}+{\lambda}/{\Vert \mathbf{q}_j\Vert_2}} \geq 0$. For each iteration, $\mathbf{U}^*{(\mathbf{Q}^*)}^T$ satisfies (\ref{Cls 1}). Thus, (\ref{Cls 1}) still holds when Problem (\ref{Problem 7}) is solved. 
\end{proof}

In \textbf{Appendix A}, we prove that for existing convex SPCA-based models for UFS: CSPCA \cite{CSPCA} and AW-SPCA\cite{AW-SPCA}, the PSD constraint still holds. From a certain perspective, CSPCA and AW-SPCA can be viewed as variations of our model, which will be discussed later in \textbf{Section 4}. The significant difference between the proposed model and these two models is that we discover the true solution space for the convex formulation of SPCA's optimization problem that treats reconstruction matrix as variable. The rest of this section will show how this significant difference allows us to construct a convex model and design an optimization algorithm that is fast both in theory and in practice. 

Apparently, Problem (\ref{Problem 6}) is still a non-convex problem. Although we can update the two optimization variables alternately in each iteration to solve the problem, it could still not be efficient enough in practice. Thus, we need to reformulate Problem (\ref{Problem 6}) into a convex one. Firstly, considering the column-orthogonality of $\mathbf{U}$, it can be easily proved that 
${\Vert \mathbf{Q}^T \Vert_{2,1}}$ is equal to ${\Vert \mathbf{U}\mathbf{Q}^T \Vert_{2,1}}$. Secondly, we replace $\mathbf{U}\mathbf{Q}^T$ with a new variable $\mathbf{\Omega}\in\mathbb{R}^{d\times{d}}$, which can be viewed as a reconstruction matrix. Thirdly, according to \textbf{Theorem} \textbf{\ref{theorem 1}}, we add PSD constraint on $\mathbf{\Omega}$. Since $\mathbf{U}\in\mathbb{R}^{d\times{k}}$ and $\mathbf{Q}\in\mathbb{R}^{d\times{k}}$, the rank of $\mathbf{\Omega}$ is no greater than $k$. Therefore, we get
\begin{equation}
    \begin{aligned}
            \min\quad &{\Vert {\mathbf{X}-\mathbf{\Omega}\mathbf{X}} \Vert}^2_F+\lambda{\Vert \mathbf{\Omega} \Vert}_{2,1}\\
    s.t. \quad&{\mathbf{\Omega}\in{S^d_+}}\\
    &{rank(\mathbf{\Omega}) \leq k}.
    \end{aligned}
\end{equation}\par
As is all known, $rank(\mathbf{\Omega})$ is non-convex and is hard to solve directly. Hence we use nuclear norm $\Vert \mathbf{\Omega}\Vert_*$ instead, which is a convex approximation to the rank function. Then, we obtain the following optimization problem:
\begin{equation}
\begin{aligned}
    \min\limits_{\mathbf{\Omega}}\quad &{\Vert {\mathbf{X}-\mathbf{\Omega}\mathbf{X}} \Vert}^2_F+\lambda{\Vert \mathbf{\Omega} \Vert}_{2,1}+\eta{\Vert \mathbf{\Omega}\Vert_*}\\
    s.t. \quad&{\mathbf{\Omega}\in{S^d_+}}\\
\end{aligned}
\end{equation}
where $\lambda,\eta>0$. However, the optimization of nuclear norm can still be a challenge. Fortunately, $\mathbf{\Omega}$ is a PSD matrix, which means the following equation holds:
\begin{equation}
    \Vert \mathbf{\Omega} \Vert_*=\sum\limits_{j=1}^{n}\phi_j=\sum\limits_{j=1}^{n}\sigma_j=Tr(\mathbf{\Omega}),
\end{equation}
where $\sigma_j$ is the $i$-th eigenvalue of $\mathbf{\Omega}$. By this equation, we can replace $\Vert \mathbf{\Omega} \Vert$ with $Tr(\mathbf{\Omega})$, significantly reducing the difficulty of optimization and the computation complexity of calculating the objective function. Then finally, we construct the optimization problem of SPCA-PSD as:
\begin{equation}\label{OBJ}
\begin{aligned}
    \min\limits_{\mathbf{\Omega}}\quad &{\Vert {\mathbf{X}-\mathbf{\Omega}\mathbf{X}} \Vert}^2_F+\lambda{\Vert \mathbf{\Omega} \Vert}_{2,1}+\eta{Tr(\mathbf{\Omega})}\\
    s.t. \quad&{\mathbf{\Omega}\in{S^d_+}}.
\end{aligned}
\end{equation}
Note that we obtain this model directly from the original SPCA problem, therefore we view it as the standard convex SPCA-based model for UFS. 

After we obtain the optimal solution $\mathbf{\Omega}^*$, we can score each feature in the data matrix by calculating $\Vert \boldsymbol{\omega}_j\Vert_2$, then sort them in a descending order and select the top-ranked ones. Because $\mathbf{\Omega}^*$ can be viewed as a reconstruction matrix and is sparse in row, each element in $\Vert \boldsymbol{\omega}^i\Vert_2(i=1,2,\cdots,d)$ represents the importance of corresponding feature in $\mathbf{X}$. Thus, it is reasonable to directly use $\Vert \boldsymbol{\omega}_j\Vert_2$  as the score of the $j$-th feature.

\subsection{Optimization Algorithm}
Now, we present an optimization algorithm to solve Problem (\ref{OBJ}). Generally, we combine derivative with PSD projection to search the optimal solution on the PSD cone. In each iteration, we first ignore the PSD constraint and calculate the unconstrained solution by utilizing the derivative of the objective function. Then, we exploit the PSD projection operator to find the nearest solution on the PSD cone. The whole progress can be visualized like Fig.\ref{Algorithm chart}.
\begin{figure}[!t]
    \hspace{-0.3cm}\includegraphics[width=4in]{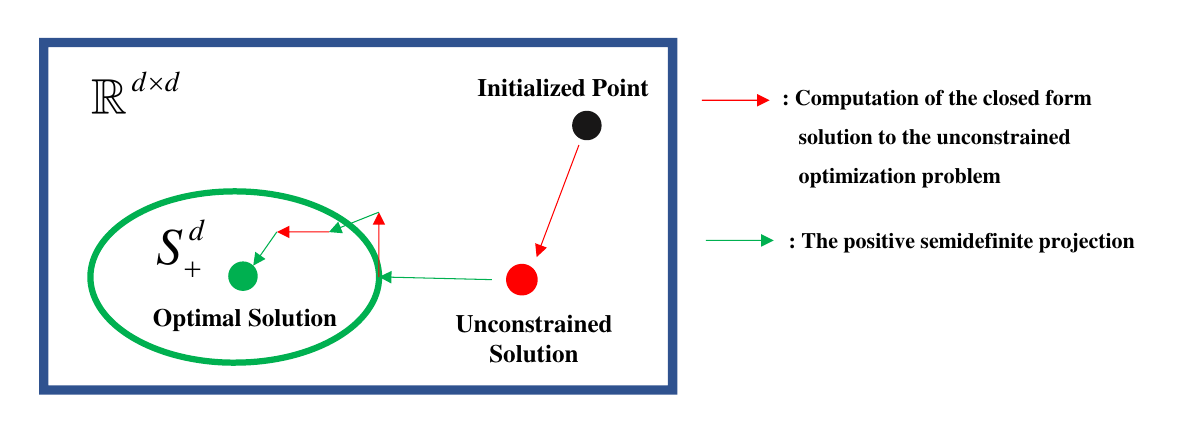}
    \caption{How our proposed algorithm works. Starting from the initialized point, we repeat two steps: 1) Compute unconstrained solution; 2) Project the obtained temporary solution to the nearest point on the PSD cone. In the first few iterations, the unconstrained solution may not be a PSD matrix. But gradually the solutions calculated by both steps will be completely in the solution space. Finally the reconstruction matrix will converge to the optimal solution. We recommend to set the initialized point directly in the PSD cone as to reduce number of iterations.}
    \label{Algorithm chart}
\end{figure}

\subsubsection{The closed form of unconstrained solution}
In order to obtain the closed form solution to the unconstrained optimization problem, we need to calculate the derivative of the objective function. First, we rewrite the problem as:
\begin{equation}\label{Problem 14}
\begin{aligned}
    \min\limits_{\mathbf{\Omega}}\quad &Tr(\mathbf{\Omega}\mathbf{S}\mathbf{\Omega}^T)-2Tr(\mathbf{S}\mathbf{\Omega})+\lambda Tr(\mathbf{\Omega}\mathbf{W}\mathbf{\Omega}^T)+\eta Tr(\mathbf{\Omega})\\
    s.t.\quad&\mathbf{\Omega}\in{S^d_+}.
\end{aligned}
\end{equation}
where $\mathbf{S}=\mathbf{X}\mathbf{X}^T$ and $\mathbf{W}\in{\mathbb{R}^{d\times{d}}}$ is a diagonal matrix whose $j$-th diagonal element is calculated by $\left(1/\left(2\sqrt{\boldsymbol{\omega}_j^T\boldsymbol{\omega}_j+\epsilon_1}\right)\right)$. Note that $\epsilon_1>0$ is a small positive number added to prevent $\sqrt{\boldsymbol{\omega}_j^T\boldsymbol{\omega}_j}$ from being zero.

In one iteration, with $\mathbf{W}$ fixed, take the derivative of the objective function in Problem (\ref{Problem 14}) and set its value equal to zero. Then we can get
\begin{equation}
    \mathbf{\Omega}=\left(\mathbf{S}-\frac{\eta}{2}\mathbf{I}_d\right)\left(\mathbf{S}+\lambda{\mathbf{W}}\right)^{-1}.
\end{equation}
In practice, $\left(\mathbf{S}+\lambda{\mathbf{W}}\right)$ can sometimes be irreversible, especially when $\lambda$ is relatively small. Hence, we improve the robustness of matrix inversion by using the following equation instead:
\begin{equation} \label{derivative}
    \mathbf{\Omega}=\left(\mathbf{S}-\frac{\eta}{2}\mathbf{I}_d\right)\left(\mathbf{S}+\lambda{\mathbf{W}}+\epsilon_2\mathbf{I}_d\right)^{-1},
\end{equation}
where $\epsilon_2$ is also a small positive number. In (\ref{derivative}), the computational complexity of $\left(\mathbf{S}+\lambda{\mathbf{W}}+\epsilon_2\mathbf{I}_d\right)^{-1}$ is $O(d^3)$. In some applications where the number of features is far greater than the number of samples, the computational cost may be too high. Inspired by \cite{CDLFS}, we utilize the Woodbury matrix identity: 
\begin{equation}
\begin{aligned}
   & \left(\mathbf{A}+\mathbf{B}\mathbf{C}\mathbf{D}\right)^{-1}\\
   & = \mathbf{A}^{-1}-\mathbf{A}^{-1}\mathbf{B}\left(\mathbf{C}^{-1}+\mathbf{D}\mathbf{A}^{-1}\mathbf{B}\right)^{-1}\mathbf{D}\mathbf{A}^{-1},
\end{aligned}
\end{equation}
and get
\begin{equation}
\begin{aligned}
   \mathbf{\Omega} &= \frac{1}{\lambda}\mathbf{W}^{-1}\mathbf{X}\left(\mathbf{I}_{n}+\frac{1}{\lambda}\mathbf{X}^{T}\mathbf{W}^{-1}\mathbf{X}\right)^{-1}\mathbf{X}^{T}\left(\mathbf{I}_{d}+\frac{\eta}{2\lambda}\mathbf{W}^{-1}\right)\\
   &-\frac{\eta}{2\lambda}\mathbf{W}^{-1}.
\end{aligned}
\end{equation}
Define $\mathbf{N}=\frac{1}{\lambda}\mathbf{W}^{-1}\mathbf{X}\left(\mathbf{I}_{n}+\frac{1}{\lambda}\mathbf{X}^{T}\mathbf{W}^{-1}\mathbf{X}\right)^{-1}\mathbf{X}^{T}$. In order to further accelerate the computational speed, we transform the matrix multiplication $\mathbf{N}\left(\mathbf{I}_{d}+\frac{\eta}{2\lambda}\mathbf{W}^{-1}\right)$ into a Hadamard product $\mathbf{N}*\mathbf{P}$, where each row vector of $\mathbf{P}$ is a copy of $diag\left(\mathbf{I}_{d}+\frac{\eta}{2\lambda}\mathbf{W}^{-1}\right)$. It is worth noting that because $\mathbf{W}$ is a diagonal matrix, its inverse can be easily calculated by computing $1/(w_{ii}+\epsilon_{3})$, where $\epsilon_{3}$ is added in case $w_{ii}$ approximate zero. Finally, we obtain another version of (\ref{derivative}) to handle the situation where $d\gg{n}$:
\begin{equation}
   \mathbf{\Omega} = \mathbf{N}*\mathbf{P}-\frac{\eta}{2\lambda}\mathbf{W}^{-1}.
\end{equation}

\subsubsection{PSD projection}
The next step is to perform the PSD projection of $\boldsymbol{\Omega}$. Theoretically, $\mathbf{\Omega}$ obtained by (\ref{derivative}) is unlikely to be a symmetric matrix, which is often the case in practice. So we should first project it onto the symmetric cone. For an arbitrary square matrix $\mathbf{M}\in{\mathbb{R}^{d\times{d}}}$, we denote the PSD projection operator as $P_{S^d_+}(\mathbf{M})$. First, $\mathbf{M}$ can be projected onto the symmetric cone by calculating
\begin{equation}
    \mathbf{M}'=\Pi_{S^d}(\mathbf{M})=\frac{1}{2}(\mathbf{M}+\mathbf{M}^T).
\end{equation}\par
Then, for the symmetric matrix $\mathbf{M}'$, we project it onto the PSD cone by exploiting the proximal operator\cite{PSD} 
\begin{equation}
    \Pi_{S^d_+}(\mathbf{M'})=\sum\limits_{i=1}^{d}\left(\sigma_i\right)_+\mathbf{u}_i\mathbf{u}^T_i,
\end{equation}
where $\sum\nolimits_{i=1}^{d}\sigma_i\mathbf{u}_i\mathbf{u}^T_i$ is the eigenvalue decomposition of $\mathbf{M'}$. In other words, we conduct $\mathbf{M'}$'s eigenvalue expansion and drop the negative eigenvalues.

So now we can give the PSD projection of $\mathbf{\Omega}$ as
\begin{equation}
    P_{S^d_+}(\mathbf{\Omega})=\Pi_{S^d_+}\left(\Pi_{S^d}(\mathbf{\Omega})\right).
\end{equation}
By exploiting the PSD projection, the algorithm points the right convergence direction, which significantly speeds up the convergence. We will see this benefit in the experiments later.

\subsubsection{Proposed algorithm}
Based the above deduction, we have two variables ($\mathbf{W}$ and $\mathbf{\Omega}$) to update in each iteration. Hence, we formally introduce our algorithm as follows: 

\noindent\textbf{Fix} $\mathbf{W}$ \textbf{and update} $\mathbf{\Omega}$

Fix $\mathbf{W}$, we update $\mathbf{\Omega}$ by calculating
\begin{equation}\label{Update Omega}
    \mathbf{\Omega}= \left\{
    \begin{aligned}
        &P_{S^d_+}\left(\mathbf{N}*\mathbf{P}-\frac{\eta}{2\lambda}\mathbf{W}^{-1}\right)\quad \text{if}\; d\gg{n},\\
        &P_{S^d_+}\left(\left(\mathbf{S}-\frac{\eta}{2}\mathbf{I}_d\right)\left(\mathbf{S}+\lambda{\mathbf{W}}+\epsilon_2\mathbf{I}_d\right)^{-1}\right)\; \text{otherwise}. 
    \end{aligned} 
    \right.
\end{equation}
\noindent\textbf{Fix} $\mathbf{\Omega}$ \textbf{and update} $\mathbf{W}$

When $\mathbf{\Omega}$ is obtained, we can update $\mathbf{W}$ by calculating $\left(1/\left(2\sqrt{\boldsymbol{\omega}_j^T\boldsymbol{\omega}_j+\epsilon_1}\right)\right)(j=1,2,\cdots,d)$.

The proposed algorithm is summarized in \textbf{Algorithm 1}.
\begin{algorithm}[t] \label{Algorithm 1}
\caption{The optimization algorithm to solve Problem (\ref{OBJ})} 
\renewcommand{\algorithmicrequire}{\textbf{Input:}}
\renewcommand{\algorithmicensure}{\textbf{Output:}}
\begin{algorithmic}[1] \REQUIRE Data matrix $\mathbf{X}\in{\mathbb{R}^{d\times{n}}}$, regularization parameters $\lambda$ and $\eta$.
\ENSURE $h$ features of the data set
\STATE Initialize $\mathbf{S}=\mathbf{X}\mathbf{H}\mathbf{X}^T$, randomly initialize $\mathbf{\Omega}\in{S^{d}_+}$ and calculate $\mathbf{W}$ using the elements in $\mathbf{\Omega}$.
\STATE \textbf{repeat}
\STATE Update $\mathbf{\Omega}$ according to (\ref{Update Omega}).
\STATE Update $\mathbf{W}$ by calculating $\left(1/\left(2\sqrt{\boldsymbol{\omega}_j^T\boldsymbol{\omega}_j+\epsilon_1}\right)\right)(j=1,2,\cdots,d)$.
\STATE \textbf{until} converge
\STATE Sort $\Vert \boldsymbol{\omega}_j\Vert_2(j=1,2,\cdots,d)$ in descending order, then select the top $h$ features.
\end{algorithmic}
\label{Al1}
\end{algorithm}

\section{Discussion}
In this section, We first analyze the convergence and the computational complexity of \textbf{Algorithm 1}. Then we give an explanation and analysis on the relationship between the proposed SPCA-PSD and other SPCA-based UFS methods.

\subsection{Convergence Analysis}
Now we prove the convergence of \textbf{Algorithm 1}.

First, let us introduce a useful lemma in \cite{Lemma1}.
\newtheorem{lemma}{Lemma}
\begin{lemma} \label{lemma1}
For any nonzero vectors $\mathbf{a}$, $\mathbf{b}\in{\mathbb{R}^{c\times{1}}}$, the following inequality holds:
\begin{equation}
    {\Vert \mathbf{a}\Vert_2}-\frac{{\Vert\mathbf{a}\Vert_2^2}}{2{\Vert \mathbf{b}\Vert_2}} \leq {\Vert \mathbf{b}\Vert_2}-\frac{{\Vert \mathbf{b}\Vert_2^2}}{2{\Vert \mathbf{b}\Vert_2}}.
\end{equation}
\end{lemma}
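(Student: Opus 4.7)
The plan is to show that the inequality collapses, after elementary algebra, into the trivial fact that the square of a real number is nonnegative. The nonzero hypothesis on $\mathbf{b}$ is used only to guarantee $\Vert\mathbf{b}\Vert_2 > 0$ so that multiplying through by $2\Vert\mathbf{b}\Vert_2$ preserves the inequality (and so that the quotient on the left is well-defined).

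First I would simplify the right-hand side, noticing that
\begin{equation*}
\Vert \mathbf{b}\Vert_2 - \frac{\Vert \mathbf{b}\Vert_2^2}{2\Vert \mathbf{b}\Vert_2} \;=\; \frac{\Vert \mathbf{b}\Vert_2}{2}.
\end{equation*}
Thus the claim is equivalent to
\begin{equation*}
\Vert \mathbf{a}\Vert_2 - \frac{\Vert \mathbf{a}\Vert_2^2}{2\Vert \mathbf{b}\Vert_2} \;\leq\; \frac{\Vert \mathbf{b}\Vert_2}{2}.
\end{equation*}
Multiplying both sides by the positive scalar $2\Vert\mathbf{b}\Vert_2$ and rearranging gives
\begin{equation*}
0 \;\leq\; \Vert\mathbf{b}\Vert_2^2 - 2\Vert\mathbf{a}\Vert_2\Vert\mathbf{b}\Vert_2 + \Vert\mathbf{a}\Vert_2^2 \;=\; \bigl(\Vert\mathbf{b}\Vert_2 - \Vert\mathbf{a}\Vert_2\bigr)^2,
\end{equation*}
which is manifestly true, with equality iff $\Vert\mathbf{a}\Vert_2 = \Vert\mathbf{b}\Vert_2$.

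There is essentially no obstacle here; the entire content of the lemma is the AM--GM-style bound $2xy \leq x^2 + y^2$ applied to $x = \Vert\mathbf{a}\Vert_2$ and $y = \Vert\mathbf{b}\Vert_2$. The only thing worth stating carefully is the nonvanishing condition on $\mathbf{b}$, which is what justifies the division and the sign-preserving multiplication. The lemma will then be used later (presumably in the convergence proof of Algorithm 1) as the standard surrogate-function inequality that underlies iteratively reweighted schemes for the $\ell_{2,1}$-norm, by summing the inequality over the columns of $\mathbf{\Omega}$ at consecutive iterates.
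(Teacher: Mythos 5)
Your proof is correct and is essentially the standard argument for this lemma (the paper itself does not reprove it but cites it from the iteratively reweighted $\ell_{2,1}$ literature, where the proof is exactly the reduction to $\bigl(\Vert\mathbf{a}\Vert_2-\Vert\mathbf{b}\Vert_2\bigr)^2\geq 0$). Your remarks on the role of the nonvanishing condition and on how the lemma feeds into the convergence proof of Algorithm 1 also match the paper's usage.
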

Based on \textbf{Lemma} \ref{lemma1}, we propose the following:

\begin{theorem}\label{theorem 2}
With \textbf{Algorithm 1}, the updated $\mathbf{\Omega}$ can decrease the value of the objective function in Problem (\ref{OBJ}) until converge.
\end{theorem}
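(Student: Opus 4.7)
My plan is to follow the standard iteratively reweighted least squares (IRLS) monotonicity argument, treating Algorithm 1 as a majorization–minimization scheme on the surrogate
\begin{equation*}
G(\mathbf{\Omega},\mathbf{W}) \;=\; \|\mathbf{X}-\mathbf{\Omega}\mathbf{X}\|_F^2 \;+\; \lambda\,\mathrm{Tr}(\mathbf{\Omega}\mathbf{W}\mathbf{\Omega}^T) \;+\; \eta\,\mathrm{Tr}(\mathbf{\Omega}),
\end{equation*}
which is precisely the reweighted objective (\ref{Problem 14}) that the $\mathbf{\Omega}$-update of Algorithm 1 targets. With $\mathbf{W}$ diagonal and $w_{jj}^{t}=1/(2\sqrt{\|\boldsymbol{\omega}_j^{t}\|_2^2+\epsilon_1})$, the middle term expands to $\sum_j \|\boldsymbol{\omega}_j\|_2^2/(2\sqrt{\|\boldsymbol{\omega}_j^{t}\|_2^2+\epsilon_1})$, which is exactly the quadratic majorant of $\lambda\|\mathbf{\Omega}\|_{2,1}$ suggested by Lemma~\ref{lemma1}.

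The argument then has two halves. First, I would show the surrogate descent property $G(\mathbf{\Omega}^{t+1},\mathbf{W}^t)\le G(\mathbf{\Omega}^t,\mathbf{W}^t)$. With $\mathbf{W}^t$ fixed, $G(\cdot,\mathbf{W}^t)$ is a convex quadratic whose unconstrained minimizer is exactly $(\mathbf{S}-\eta\mathbf{I}_d/2)(\mathbf{S}+\lambda\mathbf{W}^t)^{-1}$; by Theorem~\ref{theorem 1} the minimizer over $S^d_+$ coincides (or at convergence will coincide) with this closed form, so interpreting line~3 of Algorithm~1 as the exact constrained minimizer of $G(\cdot,\mathbf{W}^t)$ over $S_+^d$ and observing that $\mathbf{\Omega}^t\in S_+^d$ is feasible gives the bound. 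Second, I would apply Lemma~\ref{lemma1} column by column with $\mathbf{a}=\boldsymbol{\omega}_j^{t+1}$ and $\mathbf{b}=\boldsymbol{\omega}_j^{t}$, multiply by $\lambda$, and sum over $j=1,\ldots,d$ to obtain
\begin{equation*}
\lambda\|\mathbf{\Omega}^{t+1}\|_{2,1} - \lambda\|\mathbf{\Omega}^{t}\|_{2,1} \;\le\; \lambda\,\mathrm{Tr}\!\bigl(\mathbf{\Omega}^{t+1}\mathbf{W}^t(\mathbf{\Omega}^{t+1})^T\bigr) - \lambda\,\mathrm{Tr}\!\bigl(\mathbf{\Omega}^{t}\mathbf{W}^t(\mathbf{\Omega}^{t})^T\bigr).
\end{equation*}
Adding $\|\mathbf{X}-\mathbf{\Omega}^{t+1}\mathbf{X}\|_F^2-\|\mathbf{X}-\mathbf{\Omega}^{t}\mathbf{X}\|_F^2$ and $\eta[\mathrm{Tr}(\mathbf{\Omega}^{t+1})-\mathrm{Tr}(\mathbf{\Omega}^{t})]$ to both sides converts this into $J(\mathbf{\Omega}^{t+1})-J(\mathbf{\Omega}^t)\le G(\mathbf{\Omega}^{t+1},\mathbf{W}^t)-G(\mathbf{\Omega}^t,\mathbf{W}^t)\le 0$, where $J$ denotes the objective of Problem~(\ref{OBJ}). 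Since $J$ is bounded below by zero, the monotonically nonincreasing sequence $\{J(\mathbf{\Omega}^t)\}$ converges.

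The main obstacle is the first half: rigorously justifying that the composition ``unconstrained closed-form solve followed by Euclidean PSD projection'' actually produces a constrained minimizer of $G(\cdot,\mathbf{W}^t)$ on $S_+^d$. Euclidean projection is not the $G$-metric projection, so in principle the composite step need not decrease $G$. My cleanest route is to appeal to Theorem~\ref{theorem 1}, which guarantees that the unconstrained minimizer is PSD and therefore the projection acts as the identity on the target, so the inequality $G(\mathbf{\Omega}^{t+1},\mathbf{W}^t)\le G(\mathbf{\Omega}^t,\mathbf{W}^t)$ holds exactly; one should also comment briefly that the smoothing parameter $\epsilon_1$ introduces only a vanishing perturbation that does not affect monotonicity. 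With this gap sealed, the rest of the chain is routine algebra.
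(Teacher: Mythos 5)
Your overall architecture is the same as the paper's: a majorization--minimization argument in which the $\mathbf{\Omega}$-update is assumed to decrease the reweighted surrogate $G(\mathbf{\Omega},\mathbf{W}^t)$ of Problem (\ref{Problem 14}), and Lemma \ref{lemma1}, applied column by column, multiplied by $\lambda$ and summed over $j$, converts surrogate descent into descent of the true objective of Problem (\ref{OBJ}). Your second half reproduces the paper's chain of inequalities essentially verbatim, so that part is fine.

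The issue is the first half, and you have correctly located the non-trivial step but your proposed fix does not work. You invoke Theorem \ref{theorem 1} to argue that the unconstrained minimizer $\left(\mathbf{S}-\frac{\eta}{2}\mathbf{I}_d\right)\left(\mathbf{S}+\lambda\mathbf{W}^t\right)^{-1}$ is already PSD, so that the projection acts as the identity and line 3 of Algorithm 1 is an exact constrained minimizer of $G(\cdot,\mathbf{W}^t)$ over $S^d_+$. But Theorem \ref{theorem 1} is a statement about the optimal solution of the original problem in the $(\mathbf{U},\mathbf{Q})$ parametrization, not about the minimizer of the per-iteration quadratic surrogate; and the paper itself concedes, both in Section 3.2.2 and in the caption of Fig.~\ref{Algorithm chart}, that the matrix produced by (\ref{derivative}) is ``unlikely to be a symmetric matrix'' and that in the first few iterations the unconstrained solution is not PSD (indeed $\mathbf{S}$ and $\mathbf{W}$ do not commute in general, so the product is generically nonsymmetric). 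Hence the projection genuinely moves the iterate, and since the Euclidean PSD projection is not the projection in the metric induced by the quadratic form of $G$, the composite ``solve-then-project'' step is not guaranteed to decrease $G$; the inequality $G(\mathbf{\Omega}^{t+1},\mathbf{W}^t)\le G(\mathbf{\Omega}^t,\mathbf{W}^t)$ --- the paper's inequality (\ref{Ineq 1}) --- is therefore not established by your argument. To be fair, the paper has exactly the same gap: it asserts (\ref{Ineq 1}) with ``it's easy to derive'' and never accounts for the projection step. So your proposal is a faithful reconstruction of the paper's proof, including its unproved key step, but the specific justification you offer for that step contradicts how the algorithm actually behaves and would need to be replaced (e.g., by a genuine analysis of the projected update, or by reformulating the $\mathbf{\Omega}$-step as an exact minimization over $S^d_+$).
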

\begin{proof}
we denote the updated $\mathbf{\Omega}$ as $\mathbf{\Tilde{\Omega}}$, the updated $\mathbf{W}$ as $\mathbf{\Tilde{W}}$. For Problem (\ref{Problem 14}), it's easy to derive the following inequality
\begin{equation}\label{Ineq 1}
    \begin{aligned}
        Tr\left(\mathbf{\Tilde{\Omega}}\mathbf{S}\mathbf{\Tilde{\Omega}}^T\right)&-2Tr\left(\mathbf{S}\mathbf{\Tilde{\Omega}}\right)+\lambda Tr(\mathbf{\Tilde\Omega}\mathbf{\Tilde{W}}\mathbf{\Tilde\Omega}^T)+\eta Tr(\mathbf{\Tilde\Omega})\\
        &\leq Tr(\mathbf{\Omega}\mathbf{S}\mathbf{\Omega}^T)-2Tr(\mathbf{S}\mathbf{\Omega})+\lambda Tr(\mathbf{\Omega}\mathbf{W}\mathbf{\Omega}^T)\\
        &+\eta Tr(\mathbf{\Omega}).
    \end{aligned}
\end{equation}\par
For the simplicity of expression, we denote $J(\mathbf{\Omega})=Tr(\mathbf{\Omega}\mathbf{S}\mathbf{\Omega}^T)-2Tr(\mathbf{S}\mathbf{\Omega})+\lambda Tr(\mathbf{\Omega}\mathbf{W}\mathbf{\Omega}^T)+\eta Tr(\mathbf{\Omega})$ and $J(\mathbf{\Tilde\Omega})$ as the updated $J(\mathbf{\Omega})$. Then inequality (\ref{Ineq 1}) can be rewritten as 
\begin{equation}\label{Ineq 2}
    J(\mathbf{\Tilde\Omega}) + \lambda Tr(\mathbf{\Tilde\Omega}\mathbf{\Tilde{W}}\mathbf{\Tilde\Omega}^T) \leq J(\mathbf{\Omega}) + \lambda Tr(\mathbf{\Omega}\mathbf{W}\mathbf{\Omega}^T).
\end{equation}\par
If we add the same item $\sum\limits_{j=1}^d{\lambda\epsilon}/{2\sqrt{\boldsymbol{\omega}_j^T\boldsymbol{\omega}_j+\epsilon}}$ to both sides of (\ref{Ineq 2}), we will get
\begin{equation}\label{Ineq 3}
\begin{aligned}
    J(\mathbf{\Tilde\Omega}) &+ \lambda Tr(\mathbf{\Tilde\Omega}\mathbf{\Tilde{W}}\mathbf{\Tilde\Omega}^T) + \sum\limits_{j=1}^d\frac{{\lambda\epsilon}}{{2\sqrt{\boldsymbol{\omega}_j^T\boldsymbol{\omega}_j+\epsilon}}}  \\&\leq J(\mathbf{\Omega}) + \lambda Tr(\mathbf{\Omega}\mathbf{W}\mathbf{\Omega}^T)+\sum\limits_{j=1}^d\frac{{\lambda\epsilon}}{{2\sqrt{\boldsymbol{\omega}_j^T\boldsymbol{\omega}_j+\epsilon}}}.
\end{aligned}
\end{equation}

Since $\mathbf{\Omega}$ is a symmetric matrix, we have
\begin{equation}\label{equ 1}
    Tr(\mathbf{\Omega}\mathbf{W}\mathbf{\Omega}^T)=Tr(\mathbf{\Omega}^T\mathbf{W}\mathbf{\Omega})=\sum\limits_{j=1}^d\frac{{\lambda{\boldsymbol{\omega}_j^T\boldsymbol{\omega}_j}}}{{2\sqrt{\boldsymbol{\omega}_j^T\boldsymbol{\omega}_j+\epsilon}}}.
\end{equation}

The same equality holds for $Tr(\mathbf{\Tilde\Omega}\mathbf{W}\mathbf{\Tilde\Omega}^T)$. Thus, we substituted (\ref{equ 1}) into (\ref{Ineq 3}) and get
\begin{equation} \label{Ineq 4}
    J(\mathbf{\Tilde\Omega})+\sum\limits_{j=1}^d\frac{{\lambda({\boldsymbol{\Tilde\omega}_j^T\boldsymbol{\Tilde\omega}_j}+\epsilon)}}{{2\sqrt{\boldsymbol{\Tilde\omega}_j^T\boldsymbol{\Tilde\omega}_j+\epsilon}}} \leq J(\mathbf{\Omega})+\sum\limits_{j=1}^d\frac{{\lambda({\boldsymbol{\omega}_j^T\boldsymbol{\omega}_j}+\epsilon)}}{{2\sqrt{\boldsymbol{\omega}_j^T\boldsymbol{\omega}_j+\epsilon}}}.
\end{equation}

According to \textbf{Lemma} \ref{lemma1}, we have
\begin{equation}\label{Ineq 5}
    \sqrt{{\boldsymbol{\Tilde\omega}_j^T\boldsymbol{\Tilde\omega}_j}+\epsilon}-\frac{{{\boldsymbol{\Tilde\omega}_j^T\boldsymbol{\Tilde\omega}_j}+\epsilon}}{{2\sqrt{\boldsymbol{\Tilde\omega}_j^T\boldsymbol{\Tilde\omega}_j+\epsilon}}} \leq \sqrt{{\boldsymbol{\omega}_j^T\boldsymbol{\omega}_j}+\epsilon}-\frac{{{\boldsymbol{\omega}_j^T\boldsymbol{\omega}_j}+\epsilon}}{{2\sqrt{\boldsymbol{\omega}_j^T\boldsymbol{\omega}_j+\epsilon}}}.
\end{equation}

Further, we can accumulate (\ref{Ineq 4}) from $j=1$ to $j=d$. Considering $\lambda>0$, we can get
\begin{equation}\label{Ineq 6}
\begin{aligned}
        \lambda\sum\limits_{j=1}^d\sqrt{{\boldsymbol{\Tilde\omega}_j^T\boldsymbol{\Tilde\omega}_j}+\epsilon}-\lambda\sum\limits_{j=1}^d\frac{{{\boldsymbol{\Tilde\omega}_j^T\boldsymbol{\Tilde\omega}_j}+\epsilon}}{{2\sqrt{\boldsymbol{\Tilde\omega}_j^T\boldsymbol{\Tilde\omega}_j+\epsilon}}} \\\leq \lambda\sum\limits_{j=1}^d\sqrt{{\boldsymbol{\omega}_j^T\boldsymbol{\omega}_j}+\epsilon}-\lambda\sum\limits_{j=1}^d\frac{{{\boldsymbol{\omega}_j^T\boldsymbol{\omega}_j}+\epsilon}}{{2\sqrt{\boldsymbol{\omega}_j^T\boldsymbol{\omega}_j+\epsilon}}}.
\end{aligned}
\end{equation}

By summing (\ref{Ineq 4}) and (\ref{Ineq 6}), we have
\begin{equation}
    J(\mathbf{\Tilde\Omega})+ \lambda\sum\limits_{j=1}^d\sqrt{{\boldsymbol{\Tilde\omega}_j^T\boldsymbol{\Tilde\omega}_j}+\epsilon} \leq J(\mathbf{\Omega}) +  \lambda\sum\limits_{j=1}^d\sqrt{{\boldsymbol{\omega}_j^T\boldsymbol{\omega}_j}+\epsilon}.
\end{equation}\par
Finally, we can get the following inequality
\begin{equation}
    J(\mathbf{\Tilde\Omega})+\lambda{\Vert {\mathbf{\Tilde\Omega}}\Vert_{2,1}} \leq J(\mathbf{\Omega})+\lambda{\Vert {\mathbf{\Omega}}\Vert_{2,1}}.
\end{equation}
\end{proof}

Now we have proved that $\mathbf{\Tilde{\Omega}}$ can decrease the value of the object function in Problem (\ref{OBJ}) in each iteration. Since all the terms in the objective function obviously have a lower bound $0$, the convergence is guaranteed.
\subsection{Computational Complexity Analysis}
The computational complexity of \textbf{Algorithm 1} can be decomposed into the following parts:
\begin{itemize}[leftmargin=*]
\item We need $O\left(d^2n\right)$ to initialize $\mathbf{S}$, $O\left(d^2\right)$ to initialize $\mathbf{\Omega}$, so the total initialization takes $O\left(d^2n\right)$.
\item In each iteration, it takes at most $O\left(d^3\right)$ to update $\mathbf{\Omega}$ due to the calculation of matrix inversion and the PSD projection (whether $d\gg{n}$ or not). It takes $O\left(d^2\right)$ to update $\mathbf{W}$. So the total computational complexity of one iteration is at most $O\left(d^3\right)$.
\item It takes $O\left(d^2\right)$ to calculate $\Vert \boldsymbol{\omega}_j\Vert_2(j=1,2,\cdots,d)$ and $O\left(d\log_{2}d\right)$ to sort them.
\end{itemize}

In conclusion, the total computational complexity of \textbf{Algorithm 1} is $O\left(d^2n+d^3t\right)$, where $t$ is the number of iterations. In practice, \textbf{Algorithm 1} converges fast such that $t$ is usually under 50. We compare the computational complexity of our proposed SPCA-PSD with other UFS methods used in the following comparison experiments. The result is presented in TABLE \ref{Tab 1}, where $m$ is the reduced dimension, $c$ denotes the number of classes in data, $t_i$ refers to the number of the $i$-th subproblem's iterations. We can see that only the computational complexity of SPCA-PSD and SPCAFS in one iteration isn't relevant to $n$, which means it will not expand rapidly as the number of samples increases. Although SPCAFS and SPCA-PSD have similar computational complexity because they both rely on eigenvalue decomposition, SPCAFS often takes more iterations to converge due to its non-complexity. Therefore in practice, SPCA-PSD usually costs less time than SPCAFS, which will be seen in the running time experiment.

\begin{table}[!t]
\caption{
Computational complexity of comparative UFS methods. $d$, $n$ and $t$ refer to the number of features,   the number of samples and the number of iterations, respectively. $t_i$ refers to the number of the $i$-th subproblem's iterations. }\label{Tab 1}
\centering

\setlength\tabcolsep{3pt}
\begin{tabular}{c c c c}
\specialrule{0em}{5pt}{5pt}
\toprule
  \textbf{Methods}   &  \textbf{Computational Complexity} & \textbf{Type}    & \textbf{Convexity}\\
 \midrule
   LapScore\cite{LS}  & $O\left(dn^2\right)$                       & spectral   &  No optimization\\
   \specialrule{0em}{1pt}{1pt}
   UDFS\cite{UDFS}      & $O\left(dn^2+td^3\right)$                  & spectral   &  non-convex\\
   \specialrule{0em}{1pt}{1pt}
   SOGFS\cite{SOGFS}     & $O\left(dn^2+t(d^3t_1+n^2m+ndm)\right)$    & spectral   &  non-convex\\
   \specialrule{0em}{1pt}{1pt}
   RNE\cite{RNE}       & $O\left(dn^2+t(t_1(n+s)d^2+t_2n))\right)$  & spectral        &  convex\\
   \specialrule{0em}{1pt}{1pt}
   SPCAFS\cite{SPCAFS}    & $O\left(d^2n+td^3\right)$                  & SPCA      &  non-convex\\ 
   \specialrule{0em}{1pt}{1pt}
   AW-SPCA\cite{AW-SPCA}   & $O\left(t(d^3+dn^2)\right)$                & SPCA      &  convex\\   
   \specialrule{0em}{1pt}{1pt}
   CSPCA\cite{CSPCA}     & $O\left(t(d^3+n^2)\right)$                       & SPCA       &  convex\\ 
   \specialrule{0em}{1pt}{1pt}
   SPCA-PSD  & $O\left(d^2n+td^3\right)$                       & SPCA       &  convex\\ 
 \bottomrule 
\end{tabular}
\end{table}

\subsection{Relationship between SPCA-PSD and other SPCA-based UFS Methods}
\subsubsection{Adaptive weight matrix of principal components: the advantage of SPCA-PSD over SPCAFS}
As we mention in Section 2, SPCAFS\cite{SPCAFS} constructs its model by directly incorporate a sparsity constraint into the optimization problem of PCA. In fact, the model of SPCAFS is equivalent to 
\begin{equation}
\begin{aligned} \label{Problem 6}
    \min\limits_{\mathbf{U}} \quad &{\Vert {\mathbf{X}-\mathbf{U}\mathbf{U}^T\mathbf{X}} \Vert}^2_F+\lambda{\Vert \mathbf{U}^T \Vert}_{2,p}\\
    s.t. \quad&{\mathbf{U}^T\mathbf{U}=\mathbf{I}_k}.
\end{aligned} 
\end{equation}
Apparently it is a non-convex model that cannot guarantee to converge at the global optimal solution theoretically. And the meaning of a convex formulation rooted in the original SPCA is more than just a better convergence. It can be observed that SPCAFS also uses reconstruction error as criterion. However, compared with SPCA-PSD, the reconstruction matrix counterpart in SPCAFS is calculated by $\mathbf{U}\mathbf{U}^T$. Now we take a look at the reconstruction matrix $\mathbf{\Omega}$ in SPCA-PSD. Given the eigenvalue decomposition of $\mathbf{\Omega}=\mathbf{U}\mathbf{D}\mathbf{U}^T$, it is reasonable to viewed the diagonal elements of $\mathbf{D}$ as adaptive weights corresponding to each vector in the orthogonal matrix $\mathbf{U}$. Therefore, during the learning process, $\mathbf{D}$ adaptively adds importance on certain components in the low-dimensional subspace that is projected from the original feature space, guided by the reconstruction error. In other words, $\mathbf{D}$ selects the most necessary ones of the principal components. Hence, a convex original-SPCA-based model is more likely to perform the data reconstruction better and thus have a better ability at judging the significant of a feature. Later experimental result shows that SPCA-PSD outperforms SPCAFS in most cases, which will prove the above analysis.      
\subsubsection{What does the PSD constraint mean to a convex SPCA-based UFS model}\label{Difference}
In \textbf{Appendix A}, we prove that for existing convex SPCA-based models for UFS: CSPCA \cite{CSPCA} and AW-SPCA\cite{AW-SPCA}, the PSD constraint still holds. Now we analyze the difference between these two models and our proposed SPCA-PSD:
\begin{itemize}[leftmargin=*]
    \item \textbf{AW-SPCA}. The optimization problem of AW-SPCA is as (\ref{AW-SPCA}), where $\mathbf{v}=(\mathbf{I}-\mathbf{\Omega})\mathbf{b}$ and $\mathbf{b}$ is the optimal $\ell_{2,1}$ distance based mean of data. AW-SPCA utilizes $\ell_{2,1}$-norm to form the reconstruction error term. And it focuses on finding $\mathbf{b}$, in order to centralize the data properly and enhance the robustness to outliers in the data. However, AW-SPCA doesn't consider the low-rank constraint of the reconstruction matrix that gives a PCA-based model the global manifold learning ability. And it also ignores the PSD constraint of $\mathbf{\Omega}$. 
    \begin{equation}\label{AW-SPCA}
        \min\limits_{\mathbf{\Omega},\mathbf{v}}\quad {\Vert {\mathbf{X}-\mathbf{\Omega}\mathbf{X}-\mathbf{v}\mathbf{1}^T} \Vert}_{2,1}+\lambda{\Vert \mathbf{\Omega} \Vert}_{2,1}
    \end{equation}
    \item \textbf{CSPCA}. The optimization problem of CSPCA is as (\ref{CSPCA}). CSPCA also uses $\ell_{2,1}$-norm to describe the reconstruction error. And it incorporates the trace norm (i.e. $Tr\left(\left(\mathbf{\Omega}\mathbf{\Omega}^T\right)^\frac{1}{2}\right)$) as the convex approximation to the low-rank constraint. In each iteration, CSPCA need to calculate the trace norm in order to obtain the value of the objective value, which adds computational burden on optimization. SPCA-PSD, on the other hand, is able to simplify the nuclear norm (which is also a common used convex approximation to the low-rank constraint) to the trace function of the reconstruction matrix, thanks to the PSD constraint. Obviously, the trace function is much easier to calculate and optimize.   
        \begin{equation}\label{CSPCA}
        \min\limits_{\mathbf{\Omega}}\quad {\Vert {\mathbf{X}-\mathbf{\Omega}\mathbf{X}} \Vert}_{2,1}+\lambda{\Vert \mathbf{\Omega} \Vert}_{2,1} + \eta{Tr\left(\left(\mathbf{\Omega}\mathbf{\Omega}^T\right)^\frac{1}{2}\right)}
    \end{equation}
\end{itemize}

Both CSPCA and AW-SPCA can be viewed as a robust version (of our standard convex SPCA-based model) that utilizes $\ell_{2,1}$ to describe the reconstruction error. However, due to the lack of PSD constraint, CSPCA and AW-SPCA both design an optimization algorithm purely relying on the derivative of the objective function, resulting in more iterations to search the optimal solution. According to the experimental results provided by the authors of AW-SPCA\cite{AW-SPCA}, it takes hundreds of iterations for AW-SPCA to converge on some datasets. And as we will see later, CSPCA comes out with a similar drawback according to our experimental results.

\subsection{AW-SPCA-PSD and CSPCA-PSD: Variations of SPCA-PSD}
Based on the analysis in Appendix A and Section \ref{Difference}, we further add the PSD constraint to the models of AW-SPCA and CSPCA respectively, and obtain the optimization problems of their PSD versions as (\ref{AW-SPCA-PSD}) and (\ref{CSPCA-PSD}). We optimize them with similar steps used in SPCA-PSD: first calculate the derivative of the objective function, then conduct the PSD projection. 
\begin{equation}\label{AW-SPCA-PSD}
\begin{aligned}
    \min\limits_{\mathbf{\Omega},\mathbf{v}} \quad &{\Vert {\mathbf{X}-\mathbf{\Omega}\mathbf{X}-\mathbf{v}\mathbf{1}^T}\Vert}_{2,1}+\lambda{\Vert \mathbf{\Omega} \Vert}_{2,1},\\
    s.t. \quad&{\mathbf{\Omega}\in{S^d_+}}.
\end{aligned}
\end{equation}
\par It is worth noting that with the PSD constraint, CSPCA-PSD is able to use the trace function as a convex approximation to the low-rank constraint, which releases the burden of calculating the trace norm in each iteration. In fact, CSPCA-PSD can be viewed as a variant of SPCA-PSD, considering the robustness to outliers in data. 
\begin{equation}\label{CSPCA-PSD}
\begin{aligned}
        \min\limits_{\mathbf{\Omega}}\quad &{\Vert {\mathbf{X}-\mathbf{\Omega}\mathbf{X}} \Vert}_{2,1}+\lambda{\Vert \mathbf{\Omega} \Vert}_{2,1} + \eta{Tr(\mathbf{\Omega})},\\
    s.t. \quad&{\mathbf{\Omega}\in{S^d_+}}.
\end{aligned}
\end{equation}
\par In this paper, we regard AW-SPCA-PSD and CSPCA-PSD as the variations of our proposed SPCA-PSD in the following experiments.

\section{Experiments}
In this section, we conduct six experiments to demonstrate the effectiveness of our proposed SPCA-PSD, CSPCA-PSD and AW-SPCA-PSD and provide corresponding analysis.  
\subsection{Experimental Settings}
\subsubsection{Experimental environment and datasets}
All the  experiments are conducted in MATLAB R2020a on a personal computer with 3.7-GHz R9-5900X CPU and 128GB main memory under the environment of Windows 10 operation system. The experiments are carried out on three synthetic datasets and twelve real-world datasets. The synthetic datasets include Two-moon data, Three-ring data and Three-curve data. The real-world datasets include five image datasets (PIE\cite{PIE}, Imm40\cite{Imm40}, Orlraw10P\footnote{https://jundongl.github.io/scikit-feature/datasets.html}, a subset of MNIST\footnote{http://yann.lecun.com/exdb/mnist/} and USPS\cite{USPS}), 
one spoken letter dataset (Isolet\cite{Isolet}), two biological datasets (Lung\cite{lung} and ALLAML\cite{ALLAML}) and two deep learning datasets( Indoor\underline{ }Resnet50\cite{SPCAFS} and MSTAR\underline{ }SOC\underline{ }CNN). The detail of all datasets is shown in Table \ref{Dataset}. For image datasets, we normalize the value of each feature to the range of 0 and 1, in order to narrow the grid search range of regularization parameters. For MSTAR\underline{ }SOC\underline{ }CNN, a pre-trained convolutional neural network is used to extract deep features from the standard operation condition (SOC) test set in MSTAR\cite{MSTAR}, where we collect the output of the first full connection layer as descriptors. 

\begin{table}[!t]
\caption{Statistics of datasets.} \label{Dataset}
\centering
\begin{tabular}{c|cccc}
\specialrule{0em}{1pt}{5pt}
\toprule
  \makebox[0.03\textwidth][c]{\textbf{Type}}
  &\makebox[0.05\textwidth][c]{\textbf{Dataset}}
  & \makebox[0.055\textwidth][c]{\textbf{\# Features}}
  & \makebox[0.055\textwidth][c]{\textbf{\# Samples}}   
  &\makebox[0.055\textwidth][c]{\textbf{\# Classes}}
  \\
 \midrule
   \multirow{3}{*}{Synthetic} 
   &Two-moon             &9 &800 &2\\
   \specialrule{0em}{1pt}{1pt}
   &Three-curve          &9 &900 &3\\
   \specialrule{0em}{1pt}{1pt}
   &Three-ring             &9 &900 &3\\ 
   \specialrule{0em}{1pt}{1pt} \hline \specialrule{0em}{1pt}{1pt}
   \multirow{10}{*}{Real-world}
   &USPS\cite{USPS}             &256 &9298 &10\\
   \specialrule{0em}{1pt}{1pt}
   &Isolet\cite{Isolet}           &617 &1560 &26\\
   \specialrule{0em}{1pt}{1pt}
   &MNIST            &784 &20000 &10\\
   \specialrule{0em}{1pt}{1pt}
   &Imm40\cite{Imm40}            &1024&240  &40 \\
   \specialrule{0em}{1pt}{1pt}
   &PIE\cite{PIE}              &1024&1166 &53 \\
   \specialrule{0em}{1pt}{1pt}
   &MSTAR\underline{ }SOC\underline{ }CNN
                    &1024&2425 &10 \\
   \specialrule{0em}{1pt}{1pt}
   &Indoor\underline{ }Resnet50\cite{SPCAFS}   &1024&15620&67 \\
   \specialrule{0em}{1pt}{1pt}
   &Lung\cite{lung}             &3312&203  &5  \\
   \specialrule{0em}{1pt}{1pt}
   &ALLAML\cite{ALLAML}           &7129&72   &2  \\
   \specialrule{0em}{1pt}{1pt}
   &Orlraws10P      &10304&100   &10  \\

 \bottomrule
\end{tabular}
\end{table}

\subsubsection{Comparative methods}
To evaluate the effectiveness of our proposed SPCA-PSD, CSPCA-PSD and AW-SPCA-PSD, we compare them with several state-of-the-art UFS methods of different types: LapScore\footnote{https://github-com-s.libyc.nudt.edu.cn/ZJULearning/MatlabFunc/}\cite{LS}, UDFS\footnote{http://www.cs.cmu.edu.libyc.nudt.edu.cn/~yiyang/}\cite{UDFS}, 
SOGFS\footnote{http://www.escience.cn/people/fpnie/index.html}\cite{SOGFS}, RNE\footnote{ https://github.com/liuyanfang023/KBS-RNE.}{}\cite{RNE} , SPCAFS\footnote{https://github.com/quiter2005/algorithm}\cite{SPCAFS}, CSPCA\cite{CSPCA} and AW-SPCA\cite{AW-SPCA}. And we use \emph{All Features} as a baseline method, which uses all features to perforrm clustering. Since we already have a detailed description of SPCA-based methods in Section 4, we only introduce the spectral-based methods as follows:
\begin{itemize}[leftmargin=*]
    \item \textbf{LapScore}\cite{LS}: LapScore is a filter method that constructs a knn similarity matrix and calculates predefined Laplacian score for each feature. The Laplacian score aims to describe the ability of preserving the local geometric structure of data.
    \item \textbf{UDFS}\cite{UDFS}: UDFS constructs a knn local set and defines a linear classifier matrix that projects each sample to a scaled cluster label. By utilizing $\ell_{2,1}$-norm on the projection matrix, UDFS aims to preserve local discriminative information of data.
    \item \textbf{SOGFS}\cite{SOGFS}: SOGFS optimizes an adaptive similarity matrix to obtain the ideal probabilistic neighborhood which contains exact $c$ (the number of classes) components, and conducts local manifold learning by minimizing the distance between each two nearest samples after they are projected into a low dimensional space by the feature selection matrix.
    \item \textbf{RNE}\cite{RNE}: RNE makes use of the locally linear embedding algorithm to obtain an optimal projection matrix that keeps a sample and its neighbors closed to each other after being projected onto a local manifold. And it uses $\ell_1$-norm to describe the reconstruction error.
\end{itemize}

Codes for AW-SPCA and CSPCA are implemented by ourselves. Codes of our proposed SPCA-PSD, AW-SPCA-PSD and CSPCA-PSD are publicly available\footnote{https://github.com/zjj20212035/SPCA-PSD}. Codes for other methods are provided by their original authors as footnoted.

\subsubsection{Parameter settings}
For all spectral-based methods, we set the number of neighbors $k=5$ and Gaussian kernel width $\sigma=1$. For LapScore, we use binary connection to construct similarity matrix. For SOGFS and SPCAFS, we set the reduced dimension as 
$m=c-1$ ($c$ is the number of classes in a dataset) and set $p=0.5$ for the $\ell_{2,p}$-norm. For regularization parameters, we tune them by grid search strategy from $\left\{10^{-4},10^{-3},10^{-2},10^{-1},1,10,10^{2},10^{3},10^{4}\right\}$ for the sake of fairness, and report the best results of all methods. For datasets that contain more than 300 features, we set the number of selected features as $\{50,100,150,200,250,300\}$, otherwise we set it as $\{10,30,50,70,90,110\}$. 

\subsubsection{Evaluation methodology and metrics}
In all experiments, after a UFS method has finished selecting features, we evaluate its performance by conducting K-means clustering and mapping the obtained pseudo labels to the real labels by adopting the Kuhn-Munkres algorithm. We use two classic metrics \emph{Accuracy (ACC)} and \emph{Normalized Mutual Information (NMI)}\cite{NMI} to describe the clustering results, and use \emph{Running time} to describe the efficiency of each algorithm. Both ACC and NMI are common metrics used in the field of UFS. Due to the K-means’ dependence on initialization, we repeat the clustering by 30 times and record the average values of ACC and NMI.
\begin{itemize}[leftmargin=*]
\item \textbf{ACC}: ACC is used to describe the accuracy of clustering, and is calculated according to (\ref{ACC}) and (\ref{indicator}), where $n$ is the total number of samples, $map(i)$ denotes the clustering label of the $i$-th sample after mapping to the real label, and $label(i)$ represents the ground truth label. Larger ACC indicated better performance.
\begin{equation}\label{ACC}
    \text{ACC}=\frac{1}{n}\sum\limits_{i=1}^{n}\delta\left(map(i),label(i)\right).
\end{equation}
\begin{equation}\label{indicator}
    \delta(p,q)=\left\{
    \begin{aligned}
        &1\quad \text{if}\; p=q,\\
        &0\quad \text{otherwise}. 
    \end{aligned} 
    \right.
\end{equation}
\item \textbf{NMI}: NMI is used to describe the mutual dependence between clustering results (after being mapped into the real labels) and ground truth labels. Given the clustering results $\mathbf{m}$ and the ground truth labels $\mathbf{l}$ (both include all samples), $I(\mathbf{m},\mathbf{l})$ is the mutual information between $\mathbf{m}$ and $\mathbf{l}$. $H(\mathbf{m})$ and $H(\mathbf{l})$ are the entropy of $\mathbf{m}$ and $\mathbf{l}$ respectively. The same as ACC, Larger NMI hints better clustering performance.
\begin{equation}
    \text{NMI}(\mathbf{m},\mathbf{l}) = \frac{I(\mathbf{m},\mathbf{l})}{\sqrt{H(\mathbf{m})H(\mathbf{l})}}.
\end{equation}
\item \textbf{Running time}. For an algorithm, the running time starts from the initialization and ends when the algorithm outputs the selected features. 
\end{itemize}

\subsection{Experiment 1: Evaluation of CSPCA-PSD and AW-SPCA-PSD}
In this section, we perform an experiment on our proposed AW-SPCA-PSD and CSPCA-PSD to support our deductions in Appendix A, where we prove that the true solution space of AW-SPCA and CSPCA is also the PSD cone. We run CSPCA, AW-SPCA, CSPCA-PSD and AW-SPCA-PSD on three real-world datasets: Isolet\cite{Isolet}, USPS\cite{USPS} and PIE\cite{PIE}. For fair comparison, we set the values of all regularization parameters to be 10 (which is always corresponding to the best clustering performance and means more iterations for AW-SPCA and CSPCA), and the number of selected features to be 100. The stop criterion is defined as having the absolute difference between two objective function values of adjacent iterations reach $10^{-5}$. The comprehensive performance results (including ACC, NMI, number of iterarions and running time) are shown in Table \ref{PSD test}. The convergence curves on PIE are shown in Fig.\ref{PSD Convergence}. Upon observation, we can see that with the PSD projection, CSPCA-PSD and AW-SPCA-PSD can achieve similar or even better clustering performance than CSPCA and AW-SPCA on the whole, while taking less iterations and running time to converge. And from Fig. we can see that the PSD projection accelerates the convergence. we should point it out that the PSD projection indeed brings extra computational complexity, which explains why the ratio of number of iterations to running time isn't the same between AW-SPCA and AW-SPCA-PSD. Fortunately, in practice, when the regularization parameter is set to the value corresponding to the optimal feature subset, the PSD version often takes far less iterations and running time than the original edition. Another point worth noticing is that there is no evidence that AW-SPCA-PSD and CSPCA-PSD definitely outperform their original editions in every applications. The PSD constraint is necessary only from the perspective of SPCA. But considering the comprehensive performance, we use the PSD versions in the following experiments to achieve both effectiveness and efficiency.  

\begin{table*}[!t] 
\caption{The clustering results (ACC\%$\pm$STD, NMI\%$\pm$STD, number of iterations,  running time (seconds)) of CSPCA, AW-SPCA, CSPCA-PSD and AW-SPCA-PSD on three real-world datasets. All the algorithms are run with each regularization parameter fixed to 10. We record the number of iterations and the running time when an algorithm reaches the given convergence threshold. From the table we can see that the PSD versions of CSPCA and AW-SPCA can achieve approximate or even better clustering ACC and NMI than their origin editions on the chosen data sets, while taking less iterations and running time.}
\label{PSD test}
\centering
\renewcommand\arraystretch{1.5}
\begin{tabular}{|c|c|c|c|}  \hline
  \diagbox{Method}{Dataset}                         & USPS                 & Isolet   & PIE                     \\ \hline
  CSPCA      & (44.80$\pm$2.14, 36.03$\pm$1.33, 192,  60.4239) & (43.51$\pm$2.34, 59.64$\pm$1.51, 201,  15.6377) &(43.81$\pm$2.85, 68.19$\pm$1.35, 204,  34.6264)\\ \hline
  CSPCA-PSD  & (44.75$\pm$1.74, 37.76$\pm$1.16, 32,  10.0499)& (45.17$\pm$2.49, 62.36$\pm$1.27, 25,  1.6912)  &(43.21$\pm$1.92, 67.99$\pm$1.14, 20,  2.6545)\\   \hline
  AW-SPCA    & (38.72$\pm$2.77, 32.88$\pm$1.02, 56,  31.0669)& (40.88$\pm$2.68, 56.77$\pm$1.23, 142, 11.4443) &(40.05$\pm$2.24, 64.84$\pm$1.31, 181,  21.5730)\\     \hline
  AW-SPCA-PSD &(39.58$\pm$3.10, 33.02$\pm$1.09, 38,  21.2972) & (41.25$\pm$2.92, 56.95$\pm$1.78, 46, 4.7831) &(42.06$\pm$2.24, 66.61$\pm$1.36, 35,  6.6160)\\     \hline
\end{tabular}
\end{table*}

\begin{figure}
    \centering
    \subfloat[]{\includegraphics[width=1.5in]{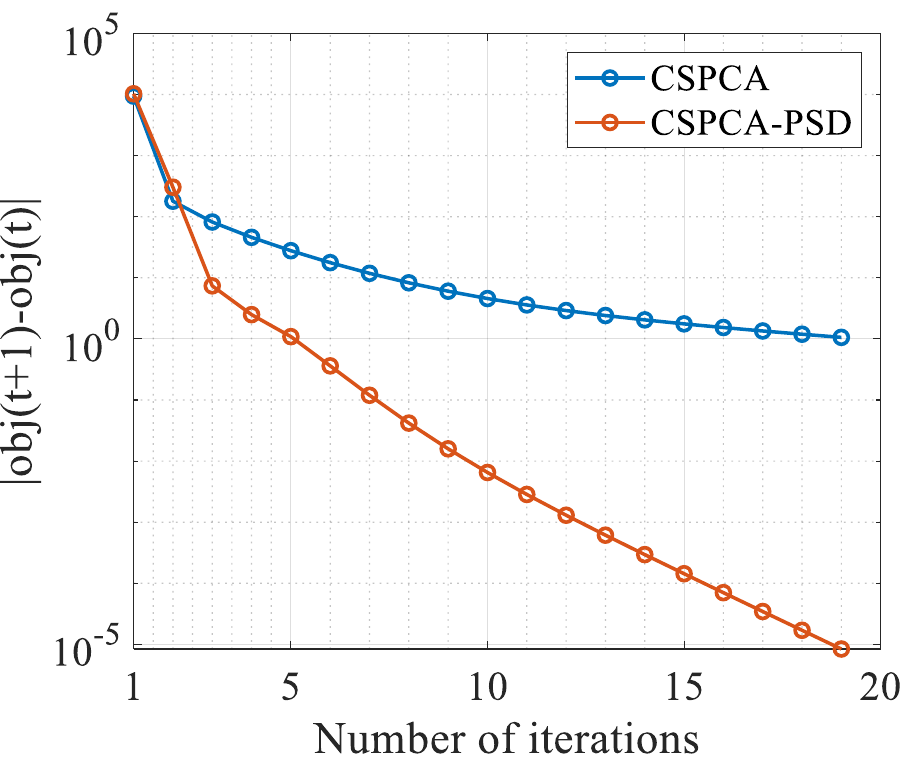}}\hspace{1em}
    \subfloat[]{\includegraphics[width=1.5in]{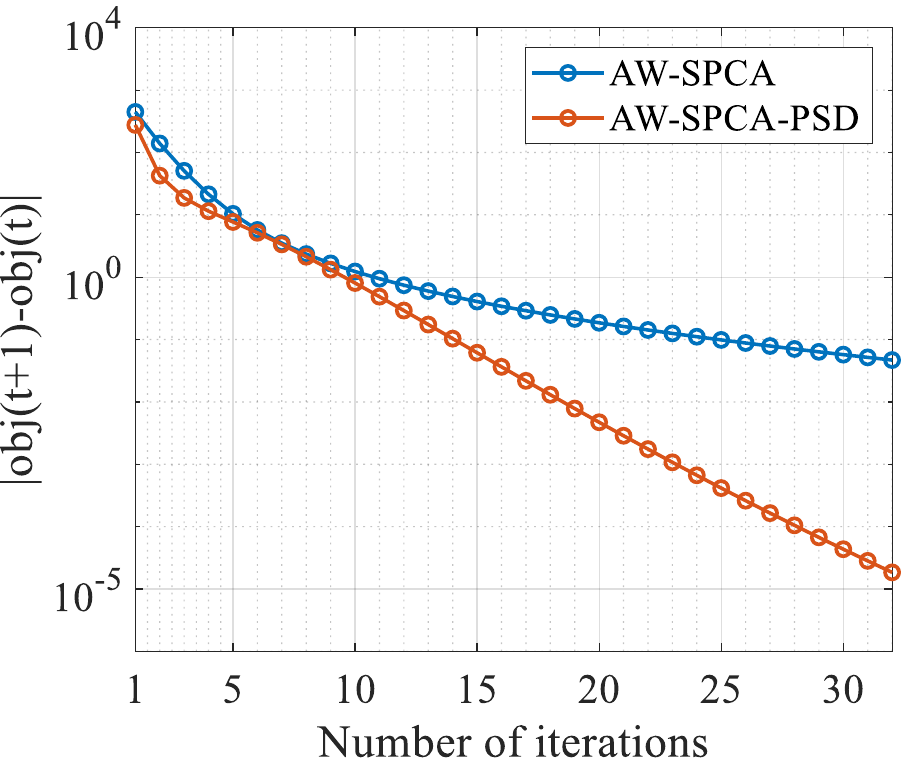}}
    \caption{Convergence curves on PIE. (a) Convergence curves of CSPCA and CSPCA-PSD. (b) Convergence curves of AW-SPCA and AW-SPCA-PSD. 'obj(t)' denotes the objective function value of the $t$-th iteration.}
    \label{PSD Convergence}
\end{figure}

\subsection{Experiment 2: Synthetic Datasets Experiments}
In this section, We run different types of UFS methods on three synthetic data sets: Two-moon, Three-ring and Three-curve. We generate these datasets by setting the first two features to obey distributions with certain shape while the rest seven features are Gaussian noise varying in amplitude. For each UFS method, after obtaining scores for all nine features, we select the top two features and perform clustering. After grid search, the feature subset corresponding to the best ACC are recorded. Then we present a scatter diagram of all samples with the selected two features as coordinates. The synthetic datasets and the feature selection results are shown in Fig.\ref{Experimental results on synthetic data}. We show the results of UDFS, SOGFS, SPCAFS and SPCA-PSD. They represent the predefined-graph based, the adaptive graph based, the non-convex SPCA-based and the convex SPCA-based methods, respectively. It can be seen that SPCA-PSD selects discriminative features on all three data sets, while other methods don’t always select the right features completely. For spectral-based methods UDFS and SOGFS, when faced with highly corrupted data, the similarity matrix becomes unreliable and misguides them to select noise features. For SPCA-based methods SPCAFS and SPCA-PSD, the global manifold learning ability makes them less sensitive to noise, therefore they can find the manifold hidden in noise interruption correctly. Compared with SPCAFS, SPCA-PSD learns an adaptive weight matrix that measures the importance of each component in the low-dimensional space, thus can better reconstruct the data and get rid of noise features.

\begin{figure*}[!t]
\begin{minipage}[b]{.5\linewidth}
    \subfloat{
    \includegraphics[width=7.5in]{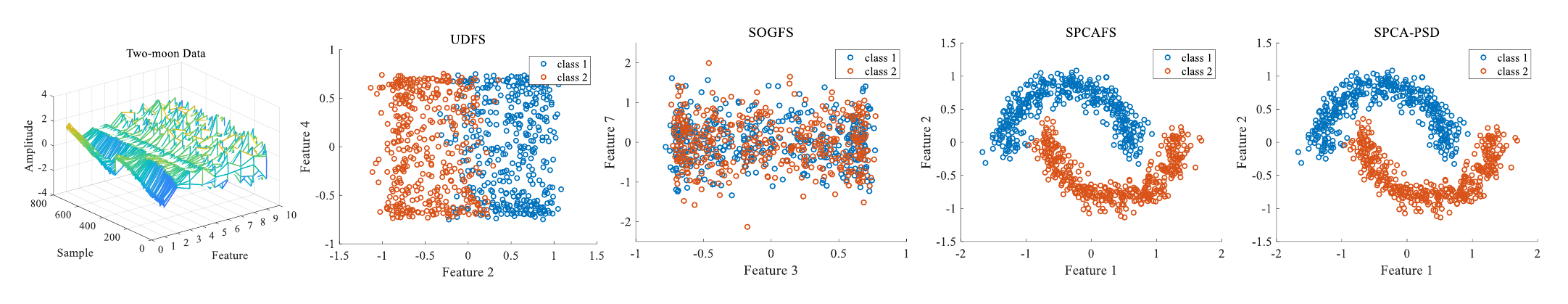}}
\end{minipage} \par
\begin{minipage}[b]{.5\linewidth}
    \subfloat{
    \includegraphics[width=7.5in]{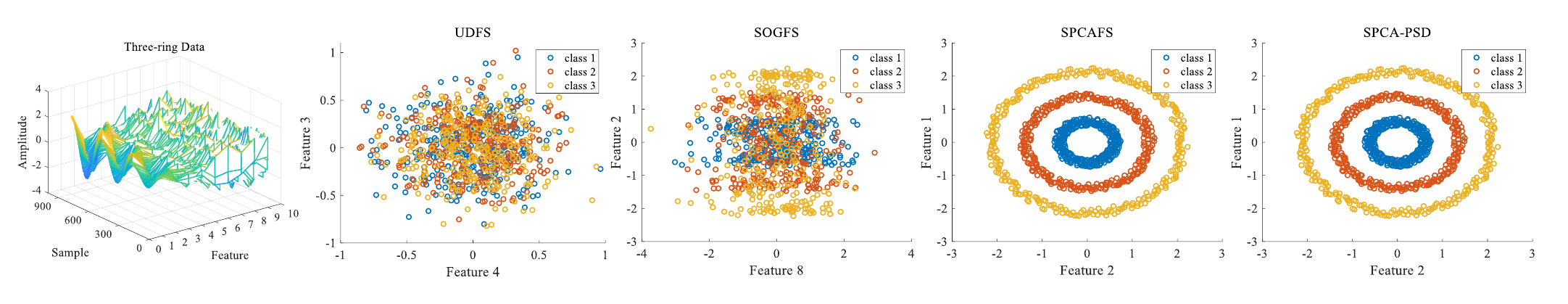}}
\end{minipage} \par
\begin{minipage}[b]{.5\linewidth}
    \subfloat{
    \includegraphics[width=7.5in]{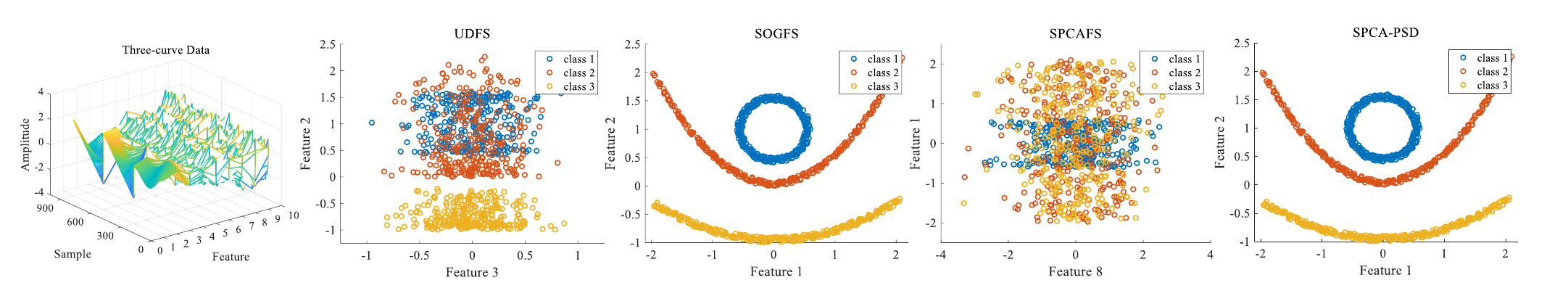}}
\end{minipage} \par
\caption{{Feature selection results on three synthetic data sets. The first image in each row is the corresponding synthetic data set on which the experimental results in the same row run. The first two features in each synthetic data set have a regular distribution in a shape of two-moon, three-ring or three-curve. For each method, we show the the first two features they selected. If one method selects both of the correct features, the scatter diagram will present a shape that distinguishes different classes completely, e.g., the result of SPCA-PSD on all three data sets.}}
\label{Experimental results on synthetic data}

\end{figure*}

\subsection{Experiment 3: Clustering Experiments on Real-world Datasets}
In this section, we conduct clustering experiments on ten real-world datasets: PIE\cite{PIE}, Imm40\cite{Imm40}, USPS\cite{USPS}, Lung\cite{lung}, Orlraws10P, Isolet\cite{Isolet}, MNIST, ALLAML\cite{ALLAML}, Indoor\underline{ }Resnet50\cite{SPCAFS}, MSTAR\underline{ }SOC\underline{ }CNN. And we choose PIE to visualize the selected features. 

\subsubsection{Clustering results of feature selection}
We run LapScore, UDFS, RNE, SOGFS, SPCAFS, AW-SPCA-PSD, CSPCA-PSD and SPCA-PSD on ten real-world datasets for clustering. For each method, we execute corresponding algorithm with regularization parameters from the given grid, and record the best ACC and NMI. The obtained ACC curves and NMI curves are shown in Fig.\ref{CFS_ACC} and Fig.\ref{CFS_NMI}.  Comparing the results of SPCA-PSD, AW-SPCA-PSD, CSPCA-PSD with other methods, we have the following observations:
\begin{itemize}[leftmargin=*]
    \item \textbf{SPCA-based methods generally outperform spectral-based methods.} Whether in terms of ACC or NMI, SPCAFS, SPCA-PSD, AW-SPCA-PSD and CSPCA-PSD get better performance on ten datasets. And spectral-based methods' performance varies more across different applications. Take LapScore for an example, its performance is on the same level of SPCA-based methods on ALLAML but becomes the least competitive on Imm40 and MSTAR\_SOC\_CNN. We can assume that the predefined graph used by LapScore is suitable in some applications while not be precise enough to describe the local structure of data in others. In comparison, SOGFS tends to be more stable on ten datasets because of the adaptive graph. However, the similarity matrix is still sensitive to noise features contained in real-world data. Therefore in general, spectral-based methods have less advantage in clustering task.
    \item \textbf{Convex SPCA-based methods generally outperform non-convex SPCA-based methods.} On PIE, Orlraws10P, ALLAML, Indoor\_Resnet50, Lung and Isolet, we can see that the curves of AW-SPCA-PSD, CSPCA-PSD and SPCA-PSD are always higher than that of SPCAFS. Just like we mention before, the adaptive weight matrix that measures the importance of each subspace component allows convex SPCA-based methods to reconstruct the data matrix better. There are, indeed, some applications (like Imm40 and MINIST) where SPCAFS can significantly outperform some of the convex methods. But the highest point belongs to at least one of the convex methods.
    \item \textbf{Convex SPCA-based methods with low-rank constraint perform better.} Due the lack of low-rank constraint, AW-SPCA-PSD can not always obtain a low-rank reconstruction matrix, which means it may not be able to project the data matrix into a low-dimensional manifold space. As a result, AW-SPCA-PSD sometimes has the least satisfying performance among three convex SPCA-based methods.
    \item \textbf{CSPCA-PSD and SPCA-PSD are both effective.} We can see that on most datasets, either CSPCA-PSD or SPCA-PSD obtains the highest ACC (or NMI). In other words, neither CSPCA-PSD nor SPCA-PSD can always perform better than the other. Since there is no evidence that $\ell_{2,1}$-norm necessarily outperforms Frobenious norm on reconstruction, the clustering results are reasonable. Anyway, CSPCA and SPCA-PSD are both competent on feature selection.   
\end{itemize}

\begin{figure*}[!t]
\centering
\includegraphics[width=7.5in]{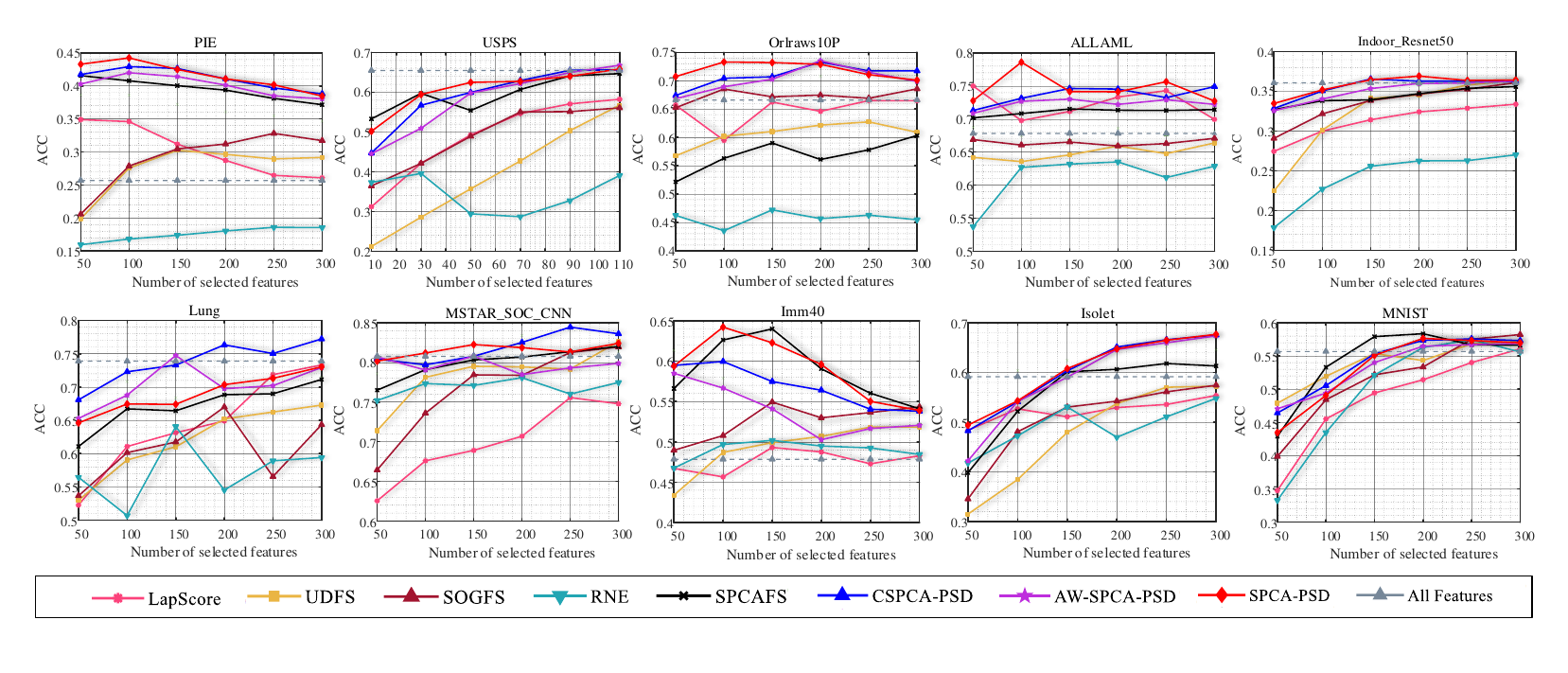}
\caption{ACC curves of different unsupervised feature selection methods on ten datasets. For each method, given the number of selected features, the best performance are recorded after conducting the algorithm with every possible regularization parameter combination from grid search. The red curves, blue curves and purple curves represent our proposed SPCA-PSD, CSPCA-PSD and AW-SPCA-PSD, respectively.}
\label{CFS_ACC}
\end{figure*}

\begin{figure*}[!t]
\centering
\includegraphics[width=7.5in]{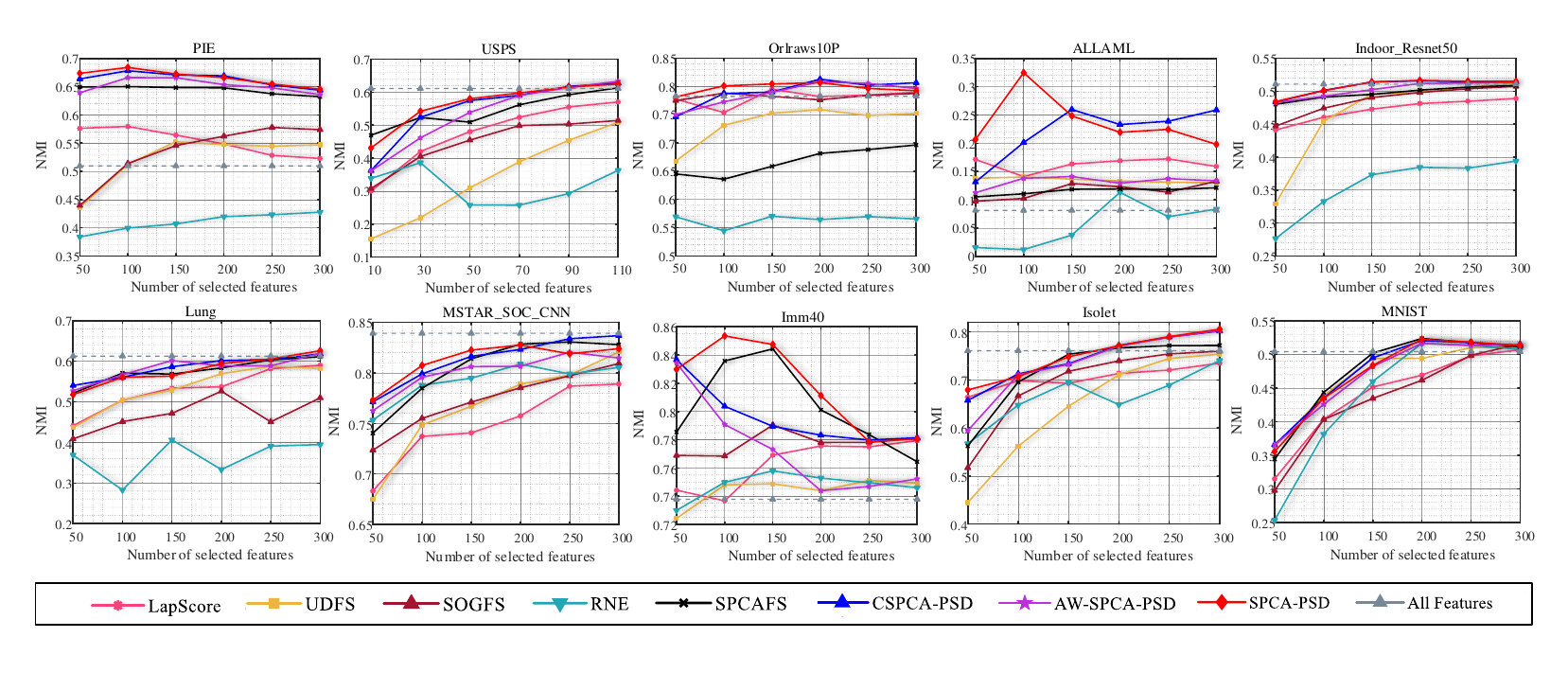}
\caption{NMI curves of different unsupervised feature selection methods on ten datasets. The experimental conditions are the same as they are in Fig.\ref{CFS_ACC}.}
\label{CFS_NMI}
\end{figure*}

\subsubsection{Visualization of feature selection} 
To visualize the effectiveness of SPCA-PSD, we compare its feature selection results on the face image dataset PIE with SPCAFS and SOGFS. The selected features are the best performing ones in terms of ACC from the clustering experiment. we set the number of selected features to be 100 and highlight these features in four randomly chosen original images. The results are shown in Table \ref{Visualization}. As is known, the features selected by a UFS method represent its 'understanding' of how classes are separated from each other considering all samples. We can see SPCAFS and SPCA-PSD tend to select more discriminative features: the basic characteristics of a human face (eyes, nose, mouth, lips), while SOGFS focuses on a limited region. Compared with SPCAFS, SPCA-PSD selects more diverse features as to maintain a more complete geometrical structure of a human face. When the number of selected features is relatively small, diverse distribution of selected features can make use of as many small regions of an image as possible, reducing redundant features. According to the clustering result in Fig. \ref{CFS_ACC}, SPCA-PSD dose achieve better performance than SPCAFS and SOGFS, thus effectiveness of the selected features can be proved. In conclusion, SPCA-PSD has a satisfying feature selection ability.

\begin{table}
\caption{Visualization of the selected features on PIE. The results are corresponding to the ACC curves in Fig.\ref{CFS_ACC}. We randomly choose four images from PIE and set the number of selected features to be 100. The yellow highlight pixels are the selected features. SPCA-PSD captures the most complete geometrical structure of a human face by selecting pixels of eyes, eyebrows, nose, mouth, bread, hair (on the top right) and glasses.} \label{Visualization}
\centering
\begin{tabular}{m{1.3cm}<{\centering} m{1.3cm}<{\centering} m{1.3cm}<{\centering} m{1.3cm}<{\centering} m{1.3cm}<{\centering}}
\toprule
  \textbf{Method}   &  \textbf{Image 1}   &
  \textbf{Image 2}    &
  \textbf{Image 3}    &
  \textbf{Image 4}
  \\
 \midrule
    Original &\includegraphics[width=0.5in]{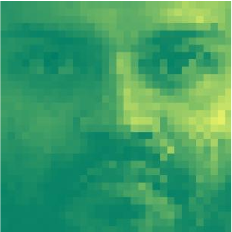} &\includegraphics[width=0.5in]{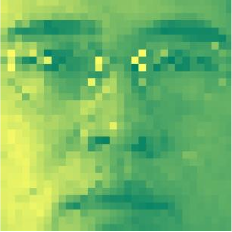}  &\includegraphics[width=0.5in]{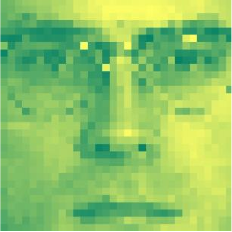}
    &\includegraphics[width=0.5in]{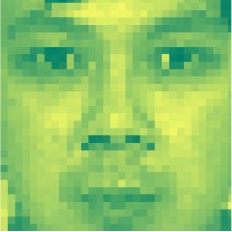} \\
    
     SOGFS &\includegraphics[width=0.5in]{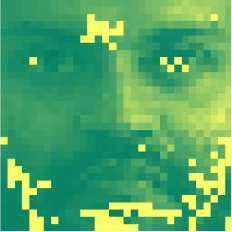} &\includegraphics[width=0.5in]{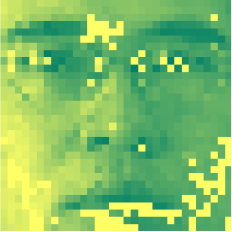}  
     &\includegraphics[width=0.5in]{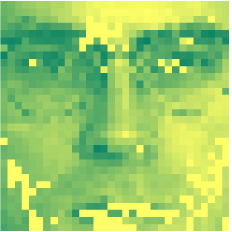} 
     &\includegraphics[width=0.5in]{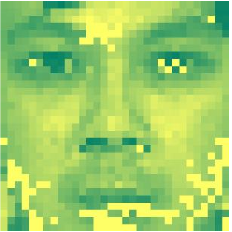}\\

     SPCAFS &\includegraphics[width=0.5in]{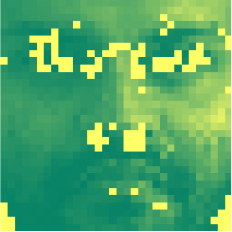} &\includegraphics[width=0.5in]{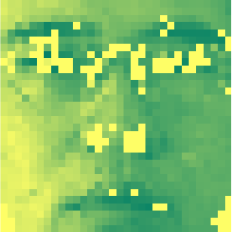}  
     &\includegraphics[width=0.5in]{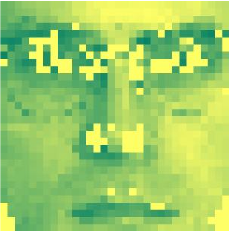}
     &\includegraphics[width=0.5in]{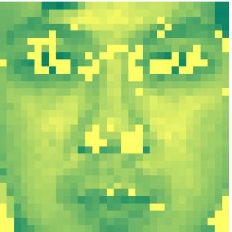}\\
     
    SPCA-PSD &\includegraphics[width=0.5in]{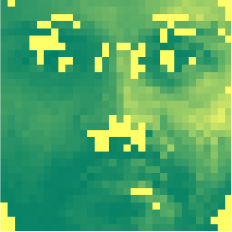} &\includegraphics[width=0.5in]{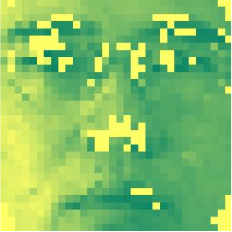}  
    &\includegraphics[width=0.5in]{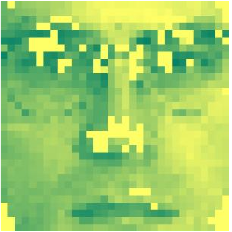} 
    &\includegraphics[width=0.5in]{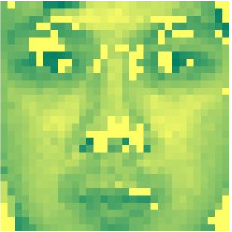} \\

 \bottomrule
\end{tabular}
\end{table}

\subsection{Experiment 4: Running Time Analysis}
In this section, we compare the running time of different EUFS methods to validate the efficiency of SPCA-PSD. We carry out an experiment on four data sets (USPS, MSTAR\_SOC\_CNN, Indoor\_Resnet50,  and MNIST) which have the largest number of samples and can best demonstrate the advantage of our proposed methods. Since LapScore is a filter method without optimization progress, we don't count it into comparison. We set all the regularization parameters to be fixed at 10, and record their running time. As for RNE, it doesn't have tunable regularization parameter, so we record the time when the number of selected features is fixed to 100. The experimental results are shown in Fig.\ref{RT}. We can see that SPCA-PSD takes less training time than other methods. Because the computational complexity of AW-SPCA-PSD and CSPCA-PSD is relevant to $n$ in each iteration, they cost more time than SPCA-PSD. As for SPCAFS, due to its non-convexity, sometimes it takes more iterations to converge, which means longer running time (e.g. Fig.\ref{RT}(a)). To prove this, we show the number of iterations of SPCA-PSD and SPCAFS for comparison in Table.\ref{SPCAFS and SPCA-PSD}. Generally, SPCA-PSD achieves faster computational speed with less number of iterations and less running time per iteration. In conclusion, SPCA-PSD is significantly fast compared to other competitive EUFS methods.

\begin{figure}[!t]
\centering
    \subfloat[MSTAR\_SOC\_CNN ($n=2425$)]{\includegraphics[width=1.6in]{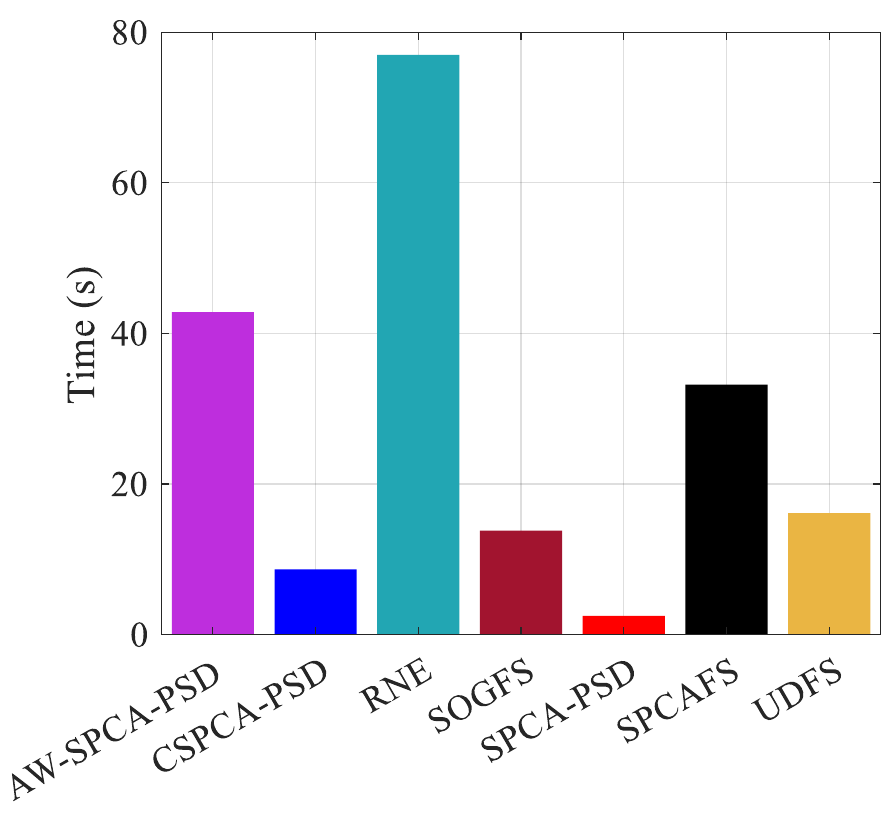}}\hspace{1em}
    \subfloat[USPS ($n=9298$)]{\includegraphics[width=1.6in]{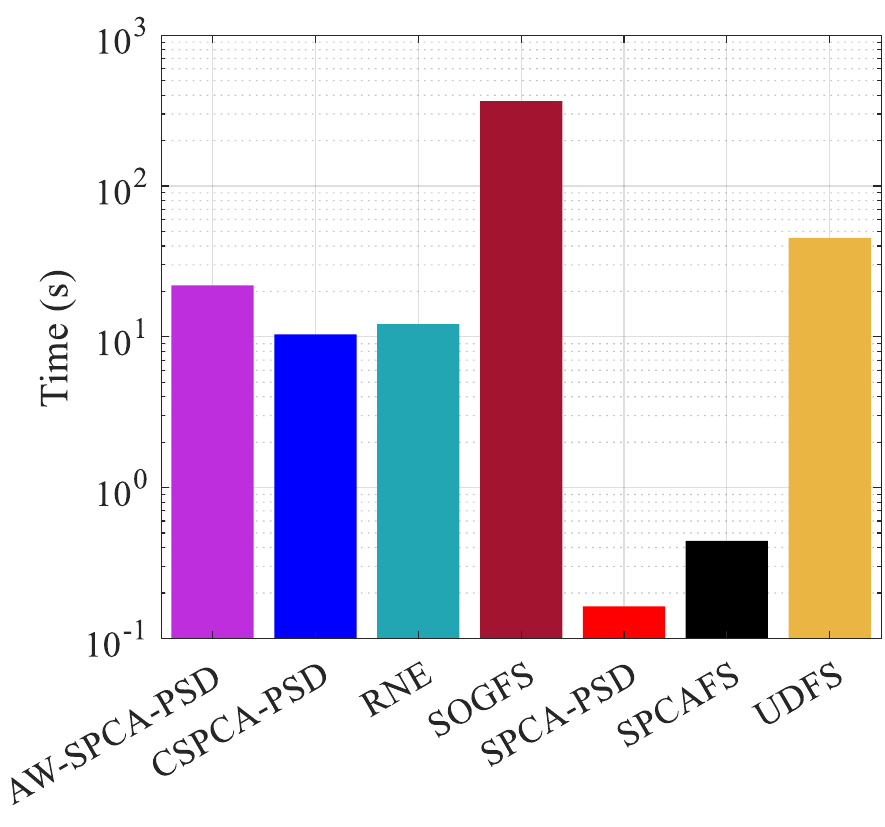}}\hspace{1em} \\
   \subfloat[Indoor\_Resnet50 ($n=15620$)]{\includegraphics[width=1.6in]{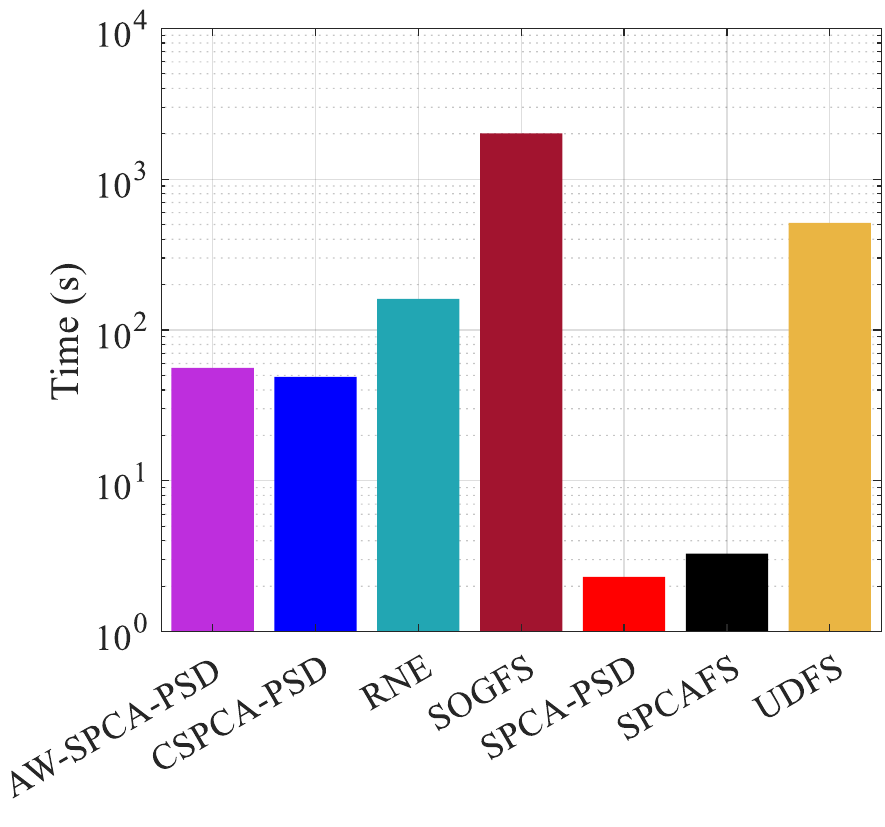}}
    \subfloat[MNIST ($n=20000$)]{\includegraphics[width=1.6in]{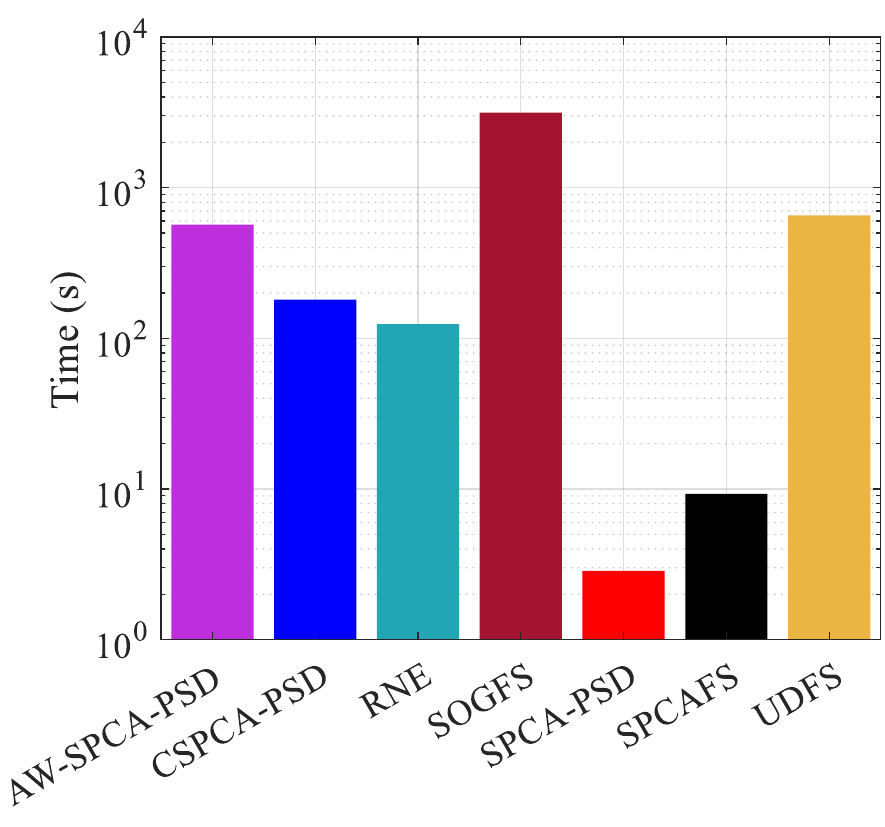}}
\caption{Running time on different data sets. $n$ denotes the number of samples in a data set. The time axis is linear in (a) and logarithmic in the rest. It can be observed that the running time of SPCA-PSD is affected the least by $n$.}
\label{RT}
\end{figure}

\begin{table}[!t] 
\caption{Number of iterations of SPCA-PSD and SPCAFS on different datasets (Number of Iterations , average running time per iteration (seconds)). It can seen that SPCA-PSD takes less number of iterations to converge than SPCAFS, except on Indoor\_Resnet50. And generally SPCA-PSD costs less time on each iteration than SPCAFS, too. }
\label{SPCAFS and SPCA-PSD}
\centering
\renewcommand\arraystretch{1.5}
\begin{tabular}{|c|c|c|}  \hline
\diagbox{Dataset}{Method}
& SPCA-PSD & SPCAFS
 \\ \hline
 MSTAR & (18, 0.1381) & (76, 0.4368)\\ \hline
 USPS  & (18, 0.0091) & (50, 0.0088)\\   \hline
 Indoor\_Resnet50  & (8, 0.2879)& (3, 1.0967) \\   \hline
 MNIST &(19, 0.1504)& (46, 0.2017) \\   \hline
\end{tabular}
\end{table}

\subsection{Experiment 5: Convergence study}
We have proven the convergence of \textbf{Algorithm} \textbf{\ref{Al1}} theoretically. Now we study the convergence in practice. We run SPCA-PSD with $\lambda$ and $\eta$ fixed to 10. The convergence curves of the objective function value on different data sets are shown in Fig.\ref{CA}. We show the results on PIE and Orlraw10P. It can be seen that the objective function generally decreases rapidly and converges within a small number of iterations. Although the convergence speed may vary in practice, the maximal number of iterations is usually under 50.

\begin{figure}[!t]
\begin{minipage}[b]{.5\linewidth}
    \centering
    \subfloat[PIE]{
    \includegraphics[width=1.7in]{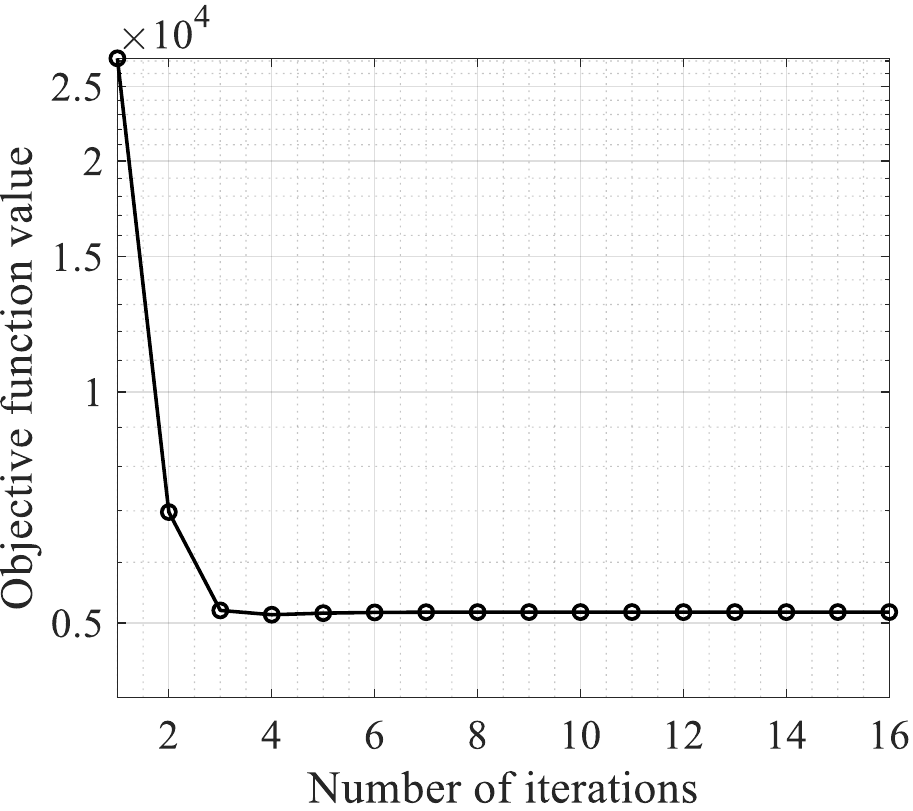}}
\end{minipage}
\begin{minipage}[b]{.5\linewidth}
    \centering
    \subfloat[Orlraws10p]{
    \includegraphics[width=1.7in]{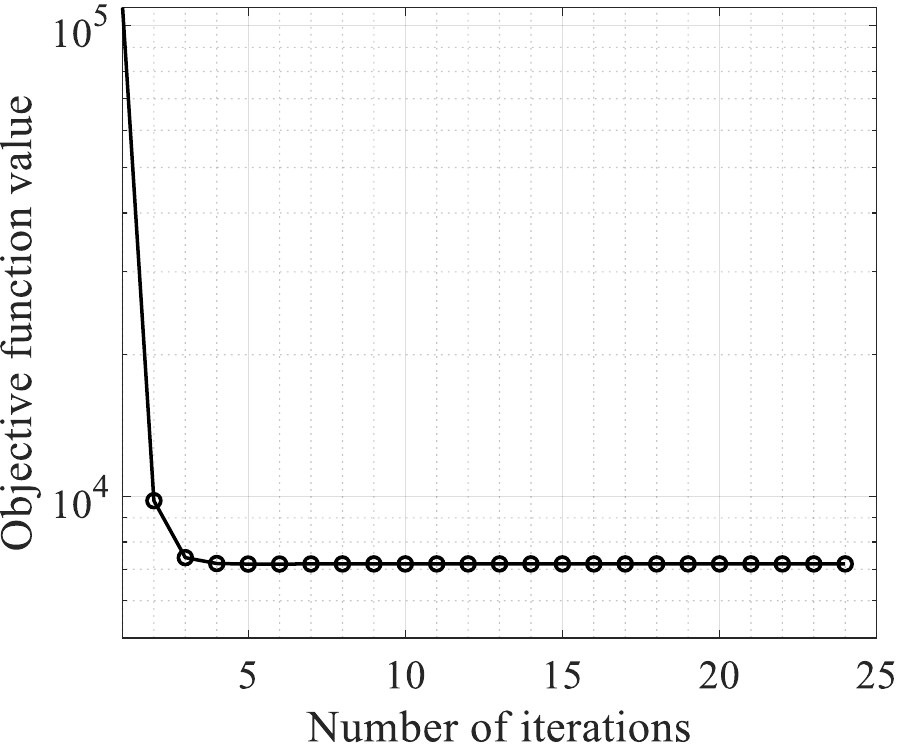}}
\end{minipage}
\caption{\centering{Convergence curves of SPCA-PSD on different data sets. The results are obtained by setting $\lambda$ and $\eta$ to be 10.}}
\label{CA}
\end{figure}

\subsection{Experiment 6: Parameter Sensitivity Analysis and Parameter-setting Strategy for SPCA-PSD}
In experiments, we find that the optimal setting of $\lambda$ and $\eta$ (in SPCA-PSD) is related to the variance of features $Tr(\mathbf{X}\mathbf{X}^T)$, where $\mathbf{X}$ is the data matrix. So we conduct several experiments to analyze parameter sensitivity and come out with a parameter-setting strategy for SPCA-PSD. We set the number of selected features to be 100 and try different combinations of $\lambda$ and $\eta$ in the range of $\left\{10^{-4},10^{-3},10^{-2},10^{-1},1,10,10^{2},10^{3},10^{4}\right\}$. We only show the clustering ACC results on Isolet, ALLAML, Imm40 and Lung, but the idea is applicable to other data sets. 

Fig.\ref{Isolet and ALLAML} shows that the ACC bars on four datasets all reach a peak when $\lambda$ or $\eta$ is set to be in the interval of $\left[1\%Tr(\mathbf{X}\mathbf{X}^T),10\%Tr(\mathbf{X}\mathbf{X}^T\right]$. For SPCA-PSD, $\lambda$ and $\eta$ control the sparsity and the rank of the reconstruction matrix separately. The sparser the reconstruction matrix is, the more redundant features it will abandon, until the sparsity reaches a point where even important features are excluded. That explains why ACC on Isolet (or Imm40) drops fast when the value of $\lambda$ crosses some threshold. As for ALLAML (or Lung), it has a larger $Tr(\mathbf{X}\mathbf{X}^T)$ than Isolet. So naturally the peak of ACC on ALLAML remains when $\lambda$ is greater than $10^3$. And a similar analysis can be done to describe how $\eta$ affects the performance. The low-rank property of the reconstruction matrix is directly concerned with the global manifold learning ability of SPCA. Therefore in certain range, the larger $\eta$ there is, the better. However, when the rank is so low that the reconstruction matrix can no longer preserve most information in data matrix, the performance could have a serious lost.

Although it seems that either fixing $\lambda$ or $\eta$ to be in the range of  $\left[1\%Tr(\mathbf{X}\mathbf{X}^T),10\%Tr(\mathbf{X}\mathbf{X}^T\right]$ can lead to the best performance of SPCA-PSD, we recommend a priority of adjusting $\eta$, considering the running time. We conduct another experiment on Isolet and ALLAML to prove this idea. We still set the number of selected features to be 100. And we fix $\lambda$ (or $\eta$) to compare the running time and ACC of SPCA-PSD with different $\eta$ (or $\lambda$). Note that we fix $\lambda$ (or $\eta$) to be the value that makes ACC reach the peak in Fig.\ref{Isolet and ALLAML}. The results are shown in Fig.\ref{eta and lambda}. We observe that for both Isolet and ALLAML, the running time of SPCA-PSD with fixed $\eta$ is often less than that with fixed $\lambda$, while the clustering ACC are on the same level. When $\lambda$ is equal to or larger than $10\%\eta$, the running time could rise sharply, thus we also recommend to set $\lambda$ at the value no greater than $10\%\eta$. 

We conclude our parameter-setting strategy for SPCA-PSD as: given a data matrix $\mathbf{X}\in\mathbb{R}^{d\times{n}}$, 1) set $\eta$ to be in range of $\left[1\%Tr(\mathbf{X}\mathbf{X}^T),10\%Tr(\mathbf{X}\mathbf{X}^T\right]$, 2) set $\lambda$ to be no larger than $10\%\eta$. With this strategy, SPCA-PSD can be both effective and efficient. As for CSPCA-PSD and AW-SPCA-PSD, similar experiments can be conducted to find their suitable parameter-setting strategy. 

\begin{figure}[!t]
\centering
   \subfloat[Isolet ($Tr(\mathbf{X}\mathbf{X}^T)=10^{5.2423}$)]{\includegraphics[width=1.6in]{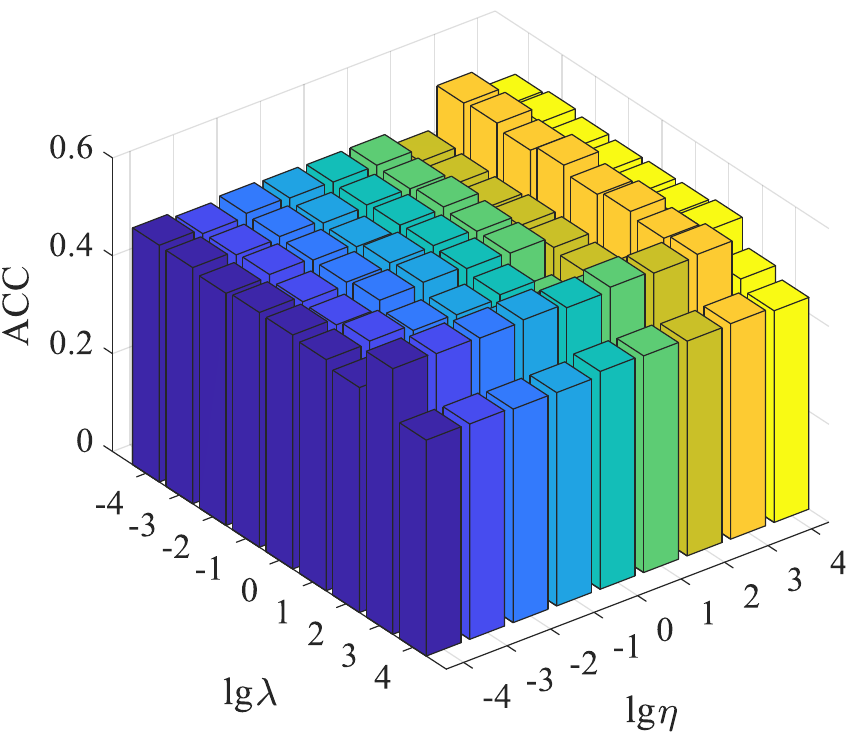}}\hspace{1em}
   \subfloat[ALLAML ($Tr(\mathbf{X}\mathbf{X}^T)=10^{5.7043}$)]{\includegraphics[width=1.6in]{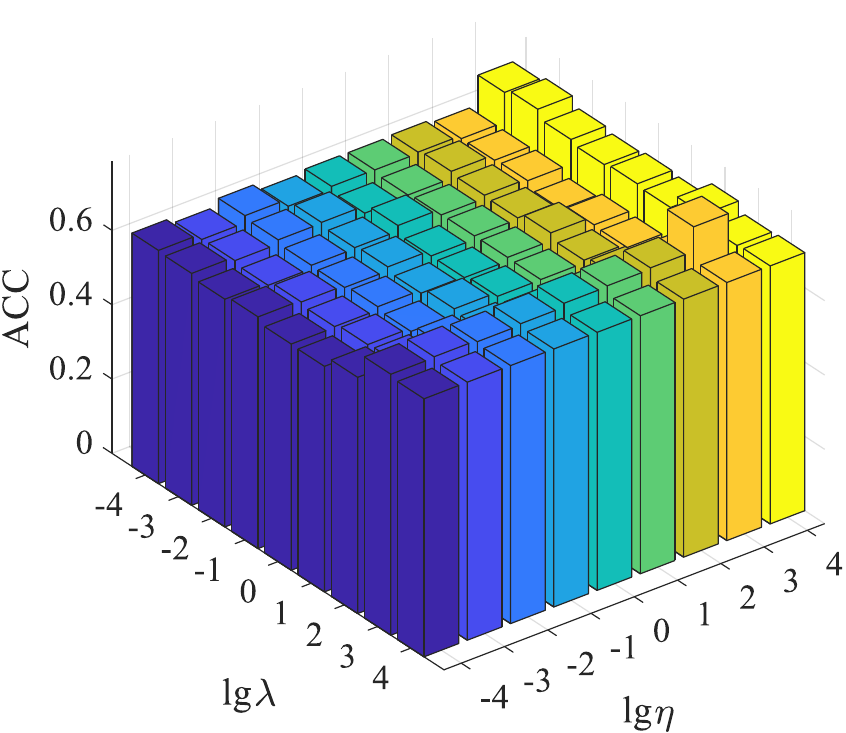}}\\
    \subfloat[Imm40 ($Tr(\mathbf{X}\mathbf{X}^T)=10^{3.4807}$)]{\includegraphics[width=1.6in]{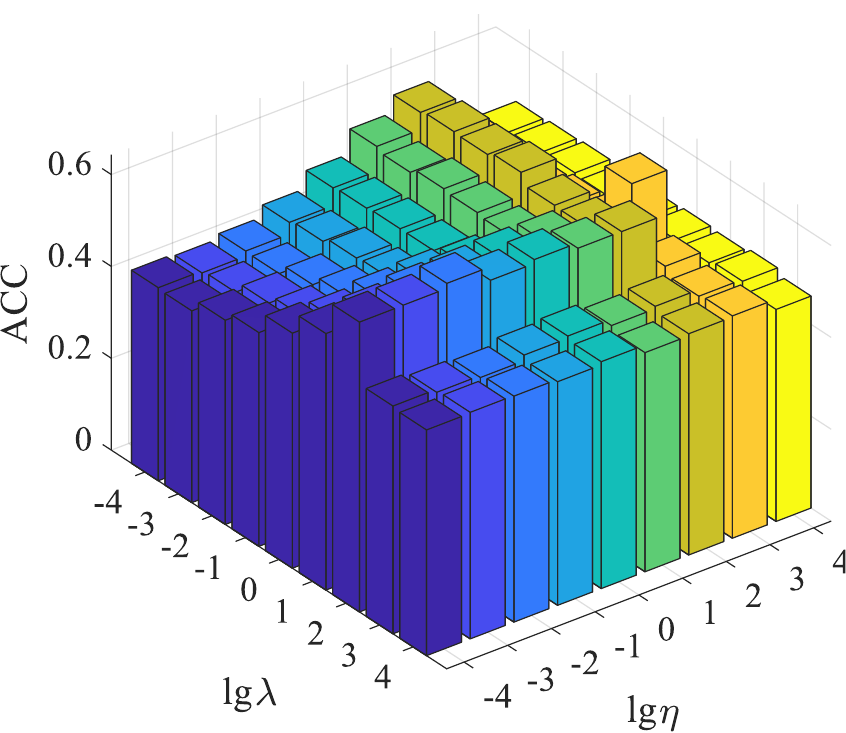}}\hspace{1em}
    \subfloat[Lung \protect\\($Tr(\mathbf{X}\mathbf{X}^T)=10^{5.7405}$)]{\includegraphics[width=1.6in]{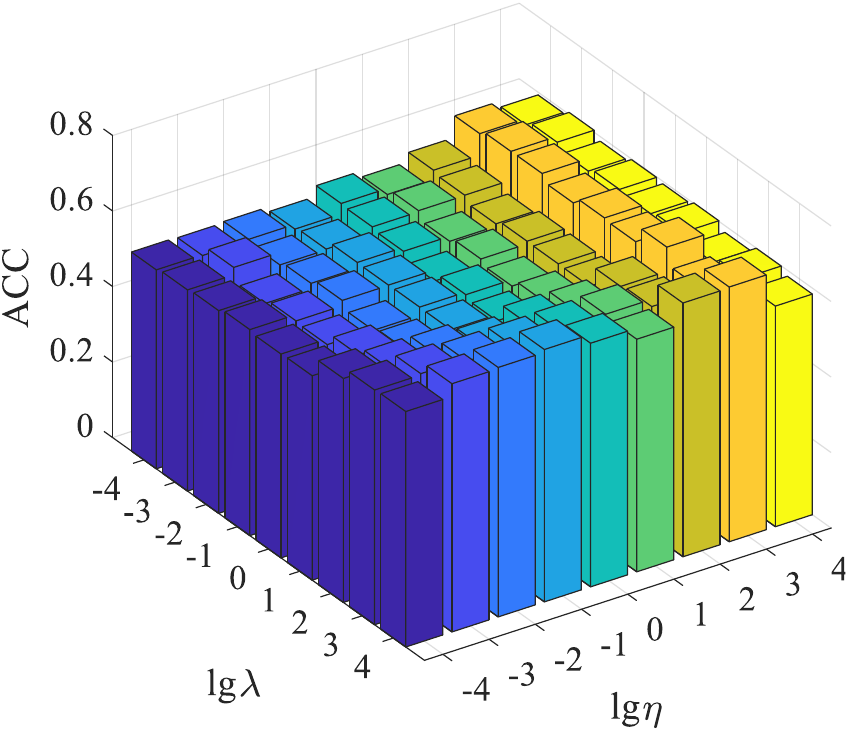}}
\caption{Clustering ACC on Isolet and ALLAML with $\lambda$ and $\eta$ varying. The number of selected features is fixed to 100. lg$\lambda$ and lg$\eta$ denote $\log_{10}{\lambda}$ and $\log_{10}{\eta}$ respectively. The positions corresponding to the best performance share a common pattern that is related to $Tr(\mathbf{X}\mathbf{X}^T$ determined by the data matrix.}
\label{Isolet and ALLAML}
\end{figure}

\begin{figure}[!t]
\centering
   \subfloat[Isolet ($\lambda=10^3$)]{\includegraphics[width=1.7in]{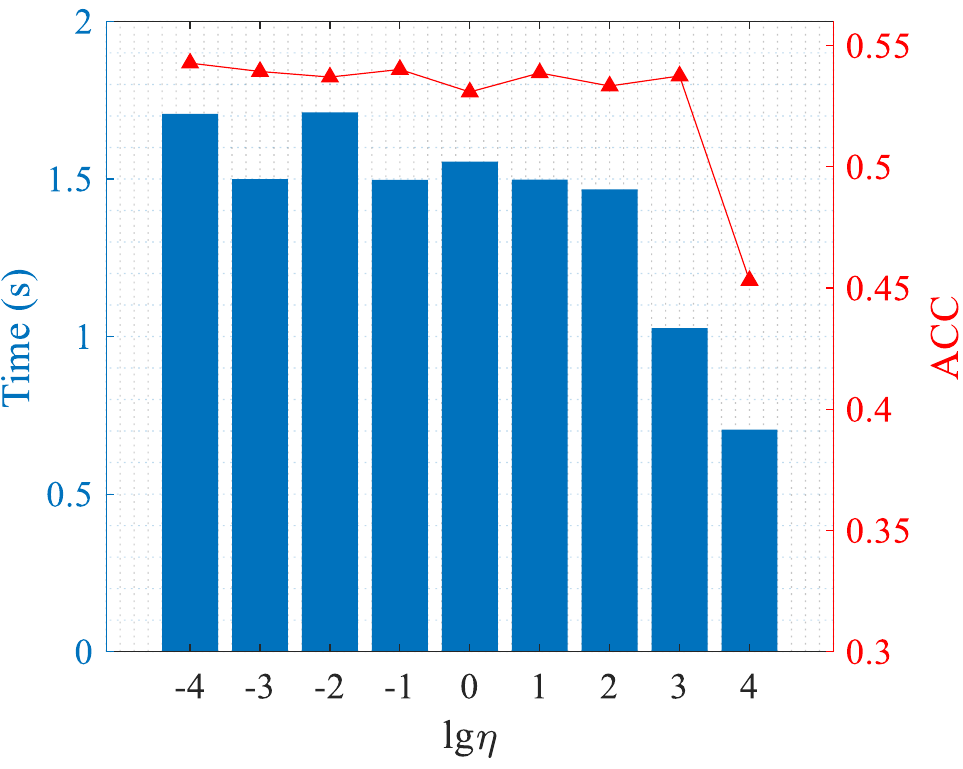}}\hspace{1em}
   \subfloat[Isolet ($\eta=10^3$)]{\includegraphics[width=1.7in]{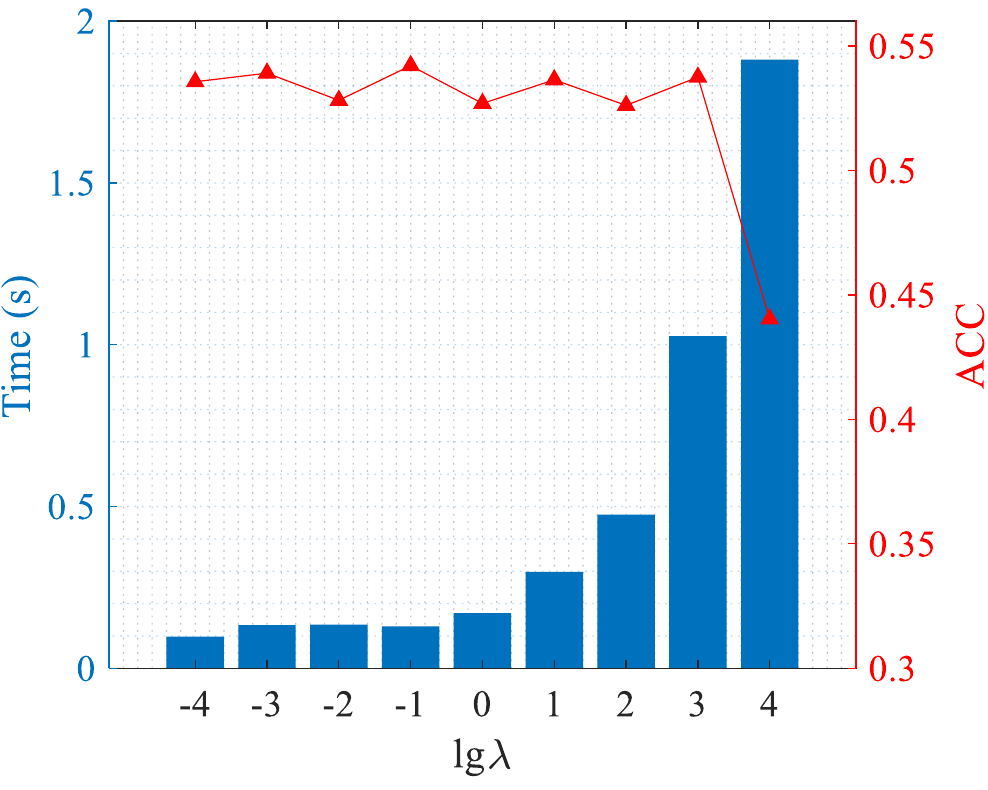}}\\
   \subfloat[ALLAML ($\lambda=10^4$)]{\includegraphics[width=1.7in]{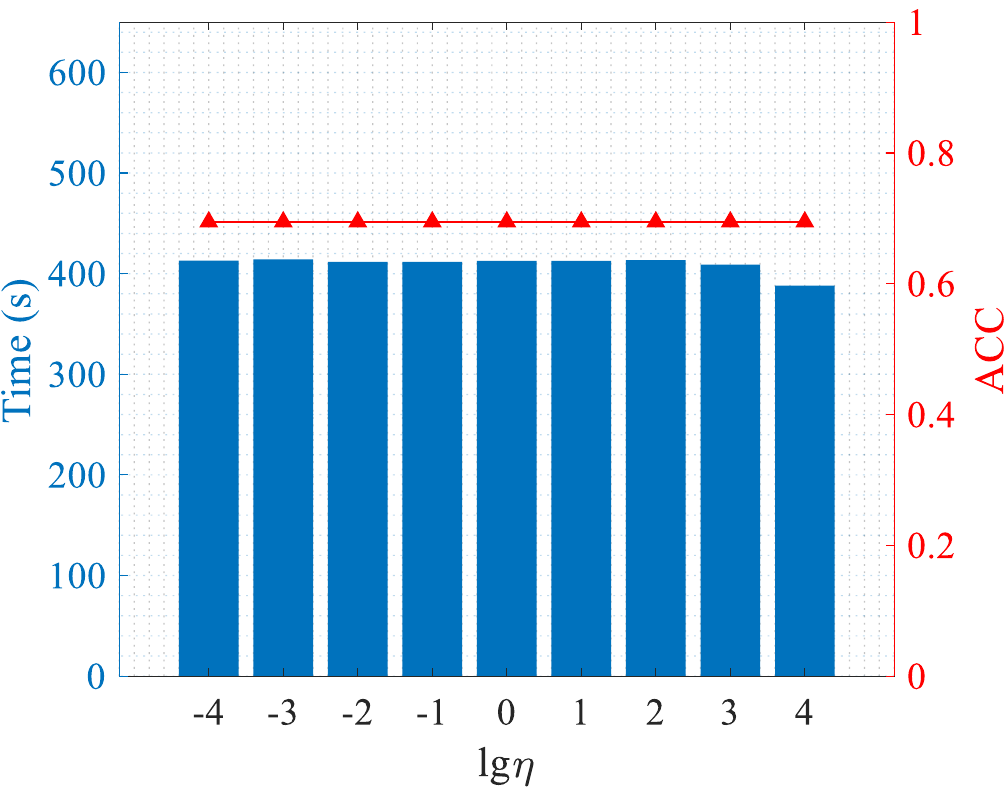}}\hspace{1em}
   \subfloat[ALLAML ($\eta=10^4$)]{\includegraphics[width=1.7in]{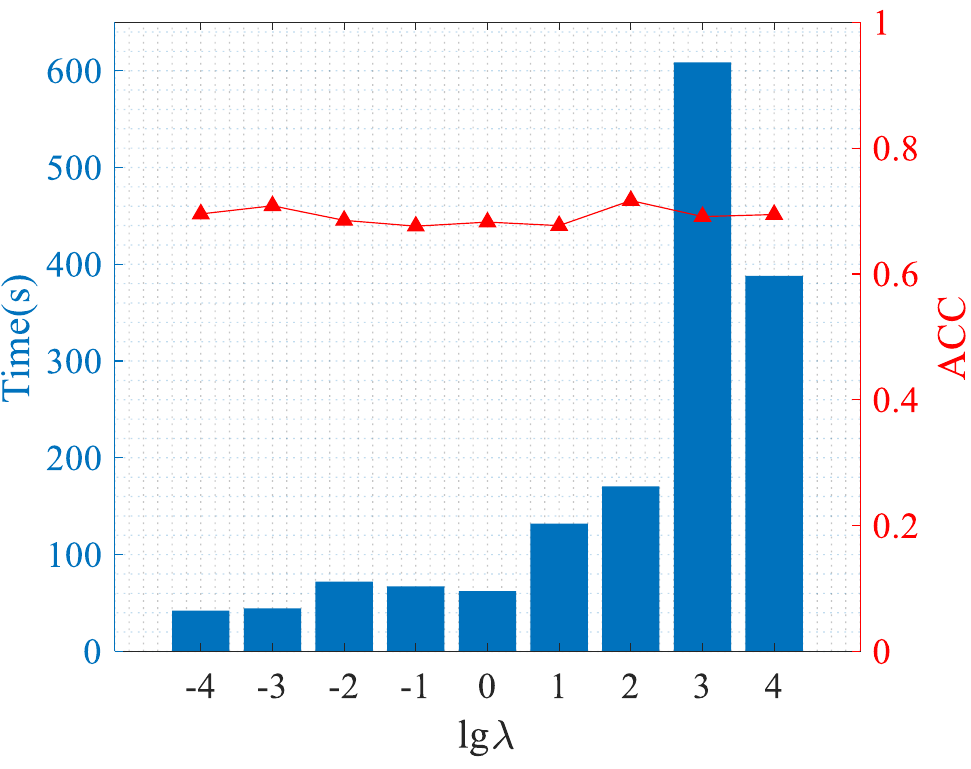}}
\caption{Running time bars and ACC curves on Isolet, ALLAML, Imm40 and Lung with $\lambda$ (or $\eta$) fixed and $\eta$ (or $\lambda$) varying. To achieve similar performance, it is often time-saving to fix $\eta$ in the range of $1\%Tr(\mathbf{X}\mathbf{X}^T)-10\%Tr(\mathbf{X}\mathbf{X}^T)$ and change $\lambda$.}
\label{eta and lambda}
\end{figure}

\section{Conclusions}
In this paper, we propose a standard convex SPCA-based model for unsupervised feature selection, and design a two-step fast optimization algorithm. We reformulate SPCA as a convex model and prove that the optimal solution falls onto the PSD cone. By adopting PSD projection in the optimization algorithm, SPCA-PSD achieves both effectiveness and efficiency. The optimal solution of the reconstruction matrix is used to select discriminative features. We prove the convergence of the proposed algorithm. We also prove that for other existing convex SPCA-based EUFS methods CSPCA and AW-SPCA, the PSD constraint holds. Therefore we propose the PSD versions of them. Experiments on both synthetic and real-world data sets are conducted to demonstrate the effectiveness and efficiency of SPCA-PSD, CSPCA-PSD and AW-SPCA-PSD. We also analyze the parameter sensitivity of SPCA-PSD, and provide a parameter-setting strategy that helps to obtain the best performance while costing the least time.

\appendices
\section{Proof of the PSD constraint in CSPCA and AW-SPCA}
\begin{theorem} 
Let $\mathbf{U}^*$ and $\mathbf{Q}^*$ be the optimal solution to the following optimization problem, Then $ \mathbf{U}^*(\mathbf{Q}^*)^T\in{S^d_+} $.
\end{theorem}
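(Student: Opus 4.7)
The plan is to imitate the proof of Theorem \ref{theorem 1}, reducing the CSPCA and AW-SPCA objectives to a family of Rayleigh quotients in each column $\mathbf{u}_j$ and showing that the optimal rank-one contribution $\mathbf{u}_j^*(\mathbf{q}_j^*)^T$ is a non-negative multiple of a projector $\mathbf{v}_j \mathbf{v}_j^T$. As in Problem (\ref{Problem 7}), I would first write $\mathbf{\Omega} = \mathbf{U}\mathbf{Q}^T = \sum_{j=1}^{k} \mathbf{u}_j \mathbf{q}_j^T$ and impose $\mathbf{u}_j^T \mathbf{u}_j = 1$ on each $\mathbf{u}_j$. The new ingredients relative to Theorem \ref{theorem 1} are the $\ell_{2,1}$ reconstruction error present in both CSPCA and AW-SPCA, the trace-norm penalty in CSPCA, and the centering variable $\mathbf{v}$ in AW-SPCA, and I would dispatch each one with the same iteratively-reweighted-least-squares (IRLS) majorization already used implicitly to pass from (\ref{Problem 7}) to (\ref{Problem 8}).

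Concretely, I would majorize $\Vert \mathbf{X} - \mathbf{U}\mathbf{Q}^T\mathbf{X}\Vert_{2,1}$ by $Tr((\mathbf{X} - \mathbf{U}\mathbf{Q}^T\mathbf{X})\mathbf{D}(\mathbf{X} - \mathbf{U}\mathbf{Q}^T\mathbf{X})^T)$, where $\mathbf{D}$ is a diagonal matrix of reciprocal column-norms frozen from the previous iterate, and absorb $\mathbf{D}^{1/2}$ into the data by setting $\mathbf{Y} = \mathbf{X}\mathbf{D}^{1/2}$. The reconstruction error then becomes $\Vert \mathbf{Y} - \mathbf{U}\mathbf{Q}^T\mathbf{Y}\Vert_F^2$, i.e. exactly the form treated in Theorem \ref{theorem 1} with $\mathbf{X}$ replaced by $\mathbf{Y}$. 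The regularizer $\lambda\Vert \mathbf{Q}^T\Vert_{2,1}$ is handled by the same surrogate $\frac{\lambda}{\Vert \mathbf{q}_j \Vert_2} \mathbf{q}_j^T \mathbf{q}_j$ per column. For the CSPCA trace term $\eta \, Tr((\mathbf{\Omega}\mathbf{\Omega}^T)^{1/2})$, which is the nuclear norm of $\mathbf{\Omega}$, I would use the standard IRLS upper bound $\tfrac{\eta}{2}\,Tr(\mathbf{\Omega}(\mathbf{\Omega}\mathbf{\Omega}^T)^{-1/2}\mathbf{\Omega}^T)$ plus a constant, so that after freezing the weight it contributes only a positive constant $c_j$ inside each scalar subproblem. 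For AW-SPCA, I would eliminate $\mathbf{v}$ in closed form at the current weights (its optimum is a weighted row-mean of the residual), which simply replaces $\mathbf{X}$ by its weighted centering and leaves an objective of the same type.

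After these reductions, each subproblem has the skeleton of (\ref{Problem 8}): given $\mathbf{u}_j$ on the unit sphere, the stationarity condition yields $\mathbf{q}_j^* = (\mathbf{Y}\mathbf{Y}^T + c_j \mathbf{I})^{-1}\mathbf{Y}\mathbf{Y}^T \mathbf{u}_j$ for some $c_j > 0$ aggregated from the frozen weights. Substituting back gives a Rayleigh quotient diagonalized by $\mathbf{Y}\mathbf{Y}^T(\mathbf{Y}\mathbf{Y}^T + c_j\mathbf{I})^{-1}\mathbf{Y}\mathbf{Y}^T = \mathbf{V}\mathbf{\Sigma}\mathbf{V}^T$, so $\mathbf{u}_j^* = s_j \mathbf{v}_j$ and $\mathbf{q}_j^* = s_j \tfrac{\sigma_{jj}^2}{\sigma_{jj}^2 + c_j}\mathbf{v}_j$ with $s_j \in \{+1,-1\}$. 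Because $s_j^2 = 1$ and the scalar $\sigma_{jj}^2/(\sigma_{jj}^2 + c_j)$ is non-negative, we get $\mathbf{u}_j^*(\mathbf{q}_j^*)^T = \tfrac{\sigma_{jj}^2}{\sigma_{jj}^2 + c_j}\mathbf{v}_j\mathbf{v}_j^T \succeq 0$, and summing over $j$ yields $\mathbf{U}^*(\mathbf{Q}^*)^T \in S^d_+$ at every IRLS iterate. Since $S^d_+$ is closed, the limit inherits this property.

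The main obstacle is the nuclear-norm term in CSPCA: unlike the $\ell_{2,1}$ weights, the matrix-valued weight $(\mathbf{\Omega}\mathbf{\Omega}^T)^{-1/2}$ couples the $\mathbf{q}_j$'s and does not immediately decouple into a per-$j$ scalar $c_j$. The key step will be to argue that, at the fixed point of the IRLS recursion, the weight matrix shares its eigenbasis with $\mathbf{Y}\mathbf{Y}^T$, which is exactly what is needed for the Rayleigh-quotient reduction to decouple across $j$; this can be seen by noting that the previous iterate $\mathbf{\Omega}$ already has the form $\mathbf{V}\mathbf{D}\mathbf{V}^T$ by the inductive hypothesis, so $(\mathbf{\Omega}\mathbf{\Omega}^T)^{-1/2} = \mathbf{V}\mathbf{D}^{-1}\mathbf{V}^T$ is diagonal in the same basis that diagonalizes $\mathbf{Y}\mathbf{Y}^T$ at the fixed point. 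Once this basis-consistency is established, the sign/non-negativity argument of Theorem \ref{theorem 1} carries over verbatim and delivers $\mathbf{U}^*(\mathbf{Q}^*)^T \in S^d_+$.
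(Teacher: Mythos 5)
Your proposal follows the same core route as the paper's appendix proof: write $\mathbf{U}\mathbf{Q}^T=\sum_j\mathbf{u}_j\mathbf{q}_j^T$ with unit-norm $\mathbf{u}_j$, majorize the $\ell_{2,1}$ reconstruction error by a weighted Frobenius norm with a frozen diagonal weight (the paper absorbs $\sqrt{\mathbf{W}}$ into $\mathbf{X}$ exactly as you absorb $\mathbf{D}^{1/2}$ into $\mathbf{Y}$), reduce each column subproblem to a Rayleigh quotient of $\mathbf{Y}\mathbf{Y}^T(\mathbf{Y}\mathbf{Y}^T+c_j\mathbf{I})^{-1}\mathbf{Y}\mathbf{Y}^T$, and conclude that $\mathbf{u}_j^*(\mathbf{q}_j^*)^T$ is a nonnegative multiple of $\mathbf{v}_j\mathbf{v}_j^T$ because $s_j^2=1$; the closed-form elimination of the centering vector for AW-SPCA is likewise identical to the paper's remark about replacing $\mathbf{X}$ with $\mathbf{X}-\mathbf{b}\mathbf{1}^T$. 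The one place you genuinely diverge is the nuclear-norm term of CSPCA: you fold it into the per-iteration subproblem via the matrix-weighted surrogate $\tfrac{\eta}{2}Tr(\mathbf{\Omega}(\mathbf{\Omega}\mathbf{\Omega}^T)^{-1/2}\mathbf{\Omega}^T)$ and then need an eigenbasis-consistency argument to decouple it across $j$, whereas the paper sidesteps this entirely --- its appendix theorem is stated and proved for Problem (\ref{Problem 31}), which contains no trace term, and the trace norm enters only afterwards as a convex surrogate for the constraint $rank(\mathbf{\Omega})\leq k$, which is automatic for $\mathbf{\Omega}=\mathbf{U}\mathbf{Q}^T$ with $\mathbf{U},\mathbf{Q}\in\mathbb{R}^{d\times k}$. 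Your extra commutation step is also the fragile one: the basis $\mathbf{V}$ of $\mathbf{Y}\mathbf{Y}^T$ itself moves between IRLS iterates as the weights are updated, so the inductive claim that the frozen weight $(\mathbf{\Omega}\mathbf{\Omega}^T)^{-1/2}$ is diagonal in the \emph{current} basis really only holds at a fixed point, as you concede. So your argument attempts something slightly stronger (PSD optimality with the trace penalty active inside the subproblem) at the cost of an additional, only heuristically justified step that the paper's formulation of the theorem does not require.
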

\begin{equation}
    \begin{aligned} \label{Problem 31}
    \min\limits_{\mathbf{U},\mathbf{Q}} \quad &{\Vert {\mathbf{X}-\mathbf{U}\mathbf{Q}^T\mathbf{X}} \Vert}_{2,1}+\lambda{\Vert \mathbf{Q}^T \Vert}_{2,1}\\
    s.t. \quad&{\mathbf{U}^T\mathbf{U}=\mathbf{I}_k}
\end{aligned}
\end{equation}
\begin{proof}
We can rewrite Problem (\ref{Problem 31}) as 
\begin{equation} \label{Problem 32}
\begin{aligned}
 &\min\quad{\left\Vert {\mathbf{X}-\sum\limits_{j=1}^{k}\mathbf{u}_j\mathbf{q}_j^T\mathbf{X}}\right\Vert}_{2,1}+\lambda\sum\limits_{j=1}^{k}{\Vert {\mathbf{q}_j}\Vert}_2\\
&s.t.\quad \mathbf{u}_j^T\mathbf{u}_j=1
\end{aligned}
\end{equation}
Then the objective function can be expanded as 
\begin{equation} \label{Problem 33}
\begin{aligned}
&{\left\Vert {\mathbf{X}-\sum\limits_{j=1}^{k}\mathbf{u}_j\mathbf{q}_j^T\mathbf{X}}\right\Vert}_{2,1}+\lambda\sum\limits_{j=1}^{k}{\Vert {\mathbf{q}_j}\Vert}_2\\
&={\left\Vert\left({\mathbf{X}-\sum\limits_{j=1}^{k}\mathbf{u}_j\mathbf{q}_j^T\mathbf{X}}\right)\sqrt{\mathbf{W}}\right\Vert}^2_F+\lambda\sum\limits_{j=1}^{k}{\Vert {\mathbf{q}_j}\Vert}_2\\
&=Tr(\mathbf{X}\mathbf{W}\mathbf{X}^T)-\sum\limits_{j=1}^{k}\bigg[2Tr(\mathbf{u}_j^T\mathbf{X}\mathbf{W}\mathbf{X}^T\mathbf{q}_j)\\
&-Tr(\mathbf{q}_j^T\mathbf{X}\mathbf{W}\mathbf{X}^T\mathbf{q}_j)\bigg.-\left.\frac{\lambda}{\Vert \mathbf{q}_j\Vert_2}\mathbf{q}_j^T\mathbf{q}_j\right]\\
&=Tr(\mathbf{X}\mathbf{W}\mathbf{X}^T)-\sum\limits_{j=1}^{k}\bigg[2(\mathbf{u}_j^T\mathbf{X}\mathbf{W}\mathbf{X}^T\mathbf{q}_j)-(\mathbf{q}_j^T\mathbf{X}\mathbf{W}\mathbf{X}^T\mathbf{q}_j)\bigg.\\
&-\left.\frac{\lambda}{\Vert \mathbf{q}_j\Vert_2})\mathbf{q}_j^T\mathbf{q}_j\right]
\end{aligned}
\end{equation}
where $\mathbf{W}\in{\mathbb{R}^{n\times{n}}}$ is a diagonal matrix whose $i$-th element is $\left(1/\left(2\left\Vert \left[\mathbf{X}-\sum\limits_{j=1}^{k}\mathbf{u}_j\mathbf{q}_j^T\mathbf{X}\right]_j\right\Vert_2\right)\right)$.\par
If we view (\ref{Problem 33}) as a sum of $k$ subproblems with respect to $\mathbf{u}_j$ and $\mathbf{q}_j$, then given a fixed $\mathbf{W}$ and a fixed $\mathbf{u}_j$, we can have each subproblem minimized at
\begin{equation} \label{Problem 34}
   \mathbf{q}_j^*=\left(\mathbf{X}\mathbf{W}\mathbf{X}^T+\frac{\lambda}{\Vert \mathbf{q}_j\Vert_2}\right)^{-1}\mathbf{X}\mathbf{W}\mathbf{X}^T\mathbf{u}_j
\end{equation}
Substitude (\ref{Problem 34}) into (\ref{Problem 33}) and we have 
\begin{equation}
    \mathbf{u}_j^*={\underset{\mathbf{u}_j^T\mathbf{u}_j=1}{\arg\min}}\mathbf{u}^T\mathbf{X}\mathbf{W}\mathbf{X}^T\left(\mathbf{X}\mathbf{W}\mathbf{X}^T+\frac{\lambda}{\Vert \mathbf{q}_j \Vert_2}\right)^{-1}\mathbf{X}\mathbf{W}\mathbf{X}^T\mathbf{u}_j
\end{equation}\par
It can be solved by performing an eigenvalue decomposition: $\mathbf{X}\mathbf{W}\mathbf{X}^T\left(\mathbf{X}\mathbf{W}\mathbf{X}^T+\frac{\lambda}{\Vert \mathbf{q}_j \Vert_2}\right)^{-1}\mathbf{X}\mathbf{W}\mathbf{X}^T=\mathbf{V}\mathbf{\Sigma}\mathbf{V}^T$. Hence $\mathbf{u}_j^*=s_j\mathbf{v}_j$ with $s_j=1$ or $-1$. Then, we obtain $\mathbf{q}_j^*=s_j\frac{\sigma^2_{jj}}{\sigma^2_{jj}+{\lambda}/{\Vert \mathbf{q}_j\Vert_2}}\mathbf{v}_j$. Therefore, we have
\begin{equation}\label{Cls 2}
    \mathbf{U}^*{(\mathbf{Q}^*)}^T=\sum\limits_{j=1}^{k}\mathbf{u}_j^*(\mathbf{q}^*)^T_j=\mathbf{V}\mathbf{D}\mathbf{V}^T\in{S^d_+}
\end{equation}
where $d_{jj}=\frac{\sigma^2_{jj}}{\sigma^2_{jj}+{\lambda}/{\Vert \mathbf{q}_j\Vert_2}} \geq 0$. For each iteration, given fixed $\mathbf{W}$, $\mathbf{U}^*{(\mathbf{Q}^*)}^T$ satisfies (\ref{Cls 2}).Thus, (\ref{Cls 2}) still holds when Problem (\ref{Problem 31}) is solved. \par
Further, if we let $\mathbf{\Omega}=\mathbf{U}^*{(\mathbf{Q}^*)}^T$, then  ${\Vert \mathbf{Q}^T \Vert_{2,1}}={\Vert \mathbf{U}\mathbf{Q}^T \Vert_{2,1}}$. And a trace norm can be used to transform $rank(\mathbf{\Omega})\leq k$ into a regularization term. Hence, we obtain the following problem 
\begin{equation}\label{Problem 37}
\begin{aligned}
    \min\quad &{\Vert {\mathbf{X}-\mathbf{\Omega}\mathbf{X}} \Vert}_{2,1}+\lambda{\Vert \mathbf{\Omega} \Vert}_{2,1}+\eta{Tr\left(\left(\mathbf{\Omega}\mathbf{\Omega}^T\right)^\frac{1}{2}\right)}\\
    s.t. \quad&{\mathbf{\Omega}\in{S^d_+}}\\
\end{aligned}
\end{equation}
which is in fact CSPCA with PSD constraint. And if we replace $\mathbf{X}$ with $\mathbf{X}-\mathbf{b}\mathbf{1}^T$, where $\mathbf{b}\in{\mathbf{R}^{d\times{1}}}$ is the mean vector of $\mathbf{X}$ in the $\ell_{2,1}$-norm space, similar deduction can be done to prove that $\mathbf{\Omega}\in{S^d_+}$ still holds. Then we yield AW-SPCA with PSD constraint.\par
In conclusion, the optimal solution of the reconstruction matrix in either CSPCA or AW-SPCA falls onto the PSD cone. 
\end{proof}

%
\ifCLASSOPTIONcompsoc
\else
  \section*{Acknowledgment}
\fi

\ifCLASSOPTIONcaptionsoff
  \newpage
\fi



 \bibliographystyle{IEEEtran}
 \bibliography{Reference}
 \vspace{-1cm}

%


%
\begin{IEEEbiography}[{\includegraphics[width=1in,height=1.25in,clip,keepaspectratio]{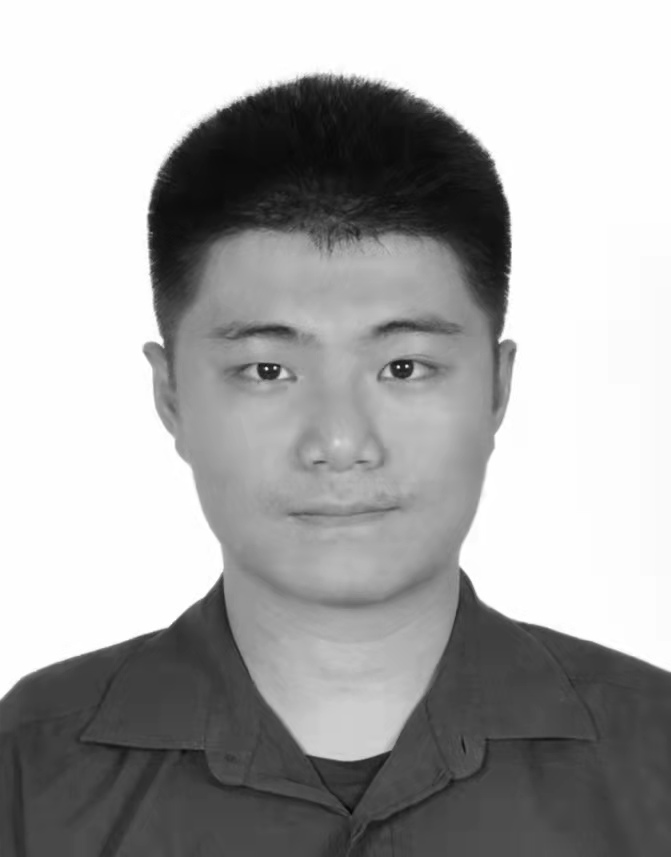}}]{Junjing Zheng}
received the B.E degree from the Central South University of China(CSU), Changsha, in 2021. He is currently pursuing the Ph.D degree in information and communication engineering with the National University of Defense Technology(NUDT), Changsha.
His research interests include machine learning, pattern analysis, and target recognition.
\end{IEEEbiography}
\vspace{-1cm}
\begin{IEEEbiography}[{\includegraphics[width=1in,height=1.25in,clip,keepaspectratio]{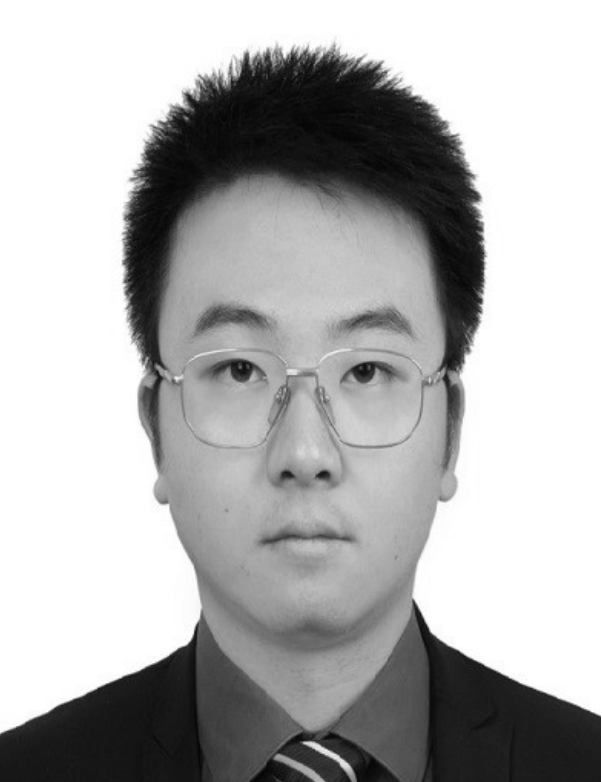}}]{Xinyu Zhang}
received the B.S. and Ph.D.
degrees from the Beijing Institute of Technology,
Beijing, China, in 2011 and 2017, respectively.
From 2015 to 2017, he visited the Ohio State
University as a Visiting Scholar. Since 2017,
he has been holding a postdoctoral position with
the National University of Defense Technology.
He is currently a Lecturer with the National
University of Defense Technology. His research
interests include array signal processing, auto
target detection, and waveform optimization.
\end{IEEEbiography}
\vspace{-1cm}
\begin{IEEEbiography}[{\includegraphics[width=1in,height=1.5in,clip,keepaspectratio]{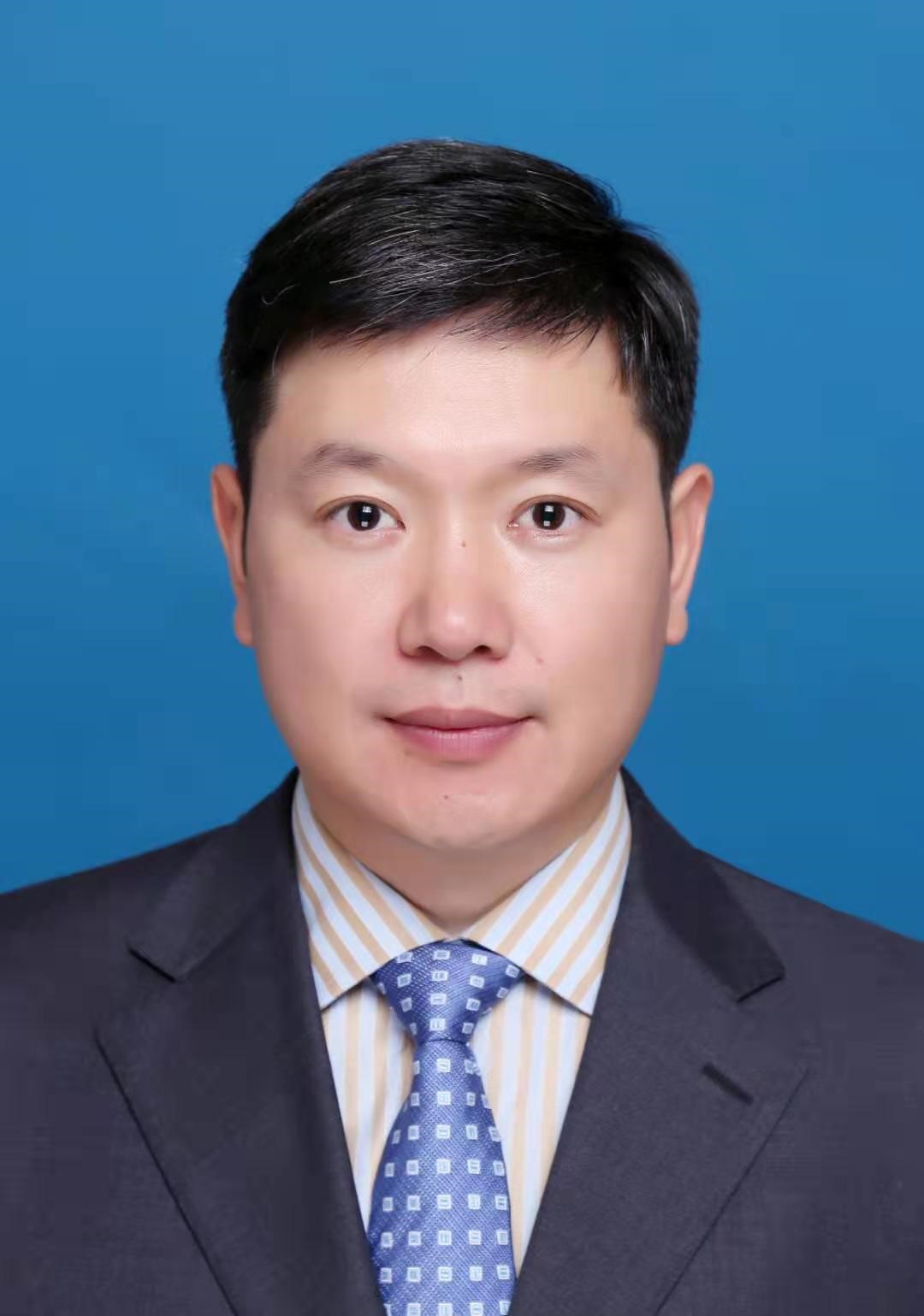}}]{Yongxiang Liu}
received his Ph.D. degree in Information and Communication Engineering from National University of Defense Technology (NUDT), Changsha, China, in 2004. Currently, He is a Full Professor in the College of Electronic Science and Technology, National University of Defense Technology. His research interests mainly include remote sensing imagery analysis, radar signal processing, Synthetic Aperture Radar (SAR) object recognition and Inverse SAR (ISAR) imaging, and machine learning.
\end{IEEEbiography}
\vspace{-1cm}
\begin{IEEEbiography}[{\includegraphics[width=1in,height=1.25in,clip,keepaspectratio]{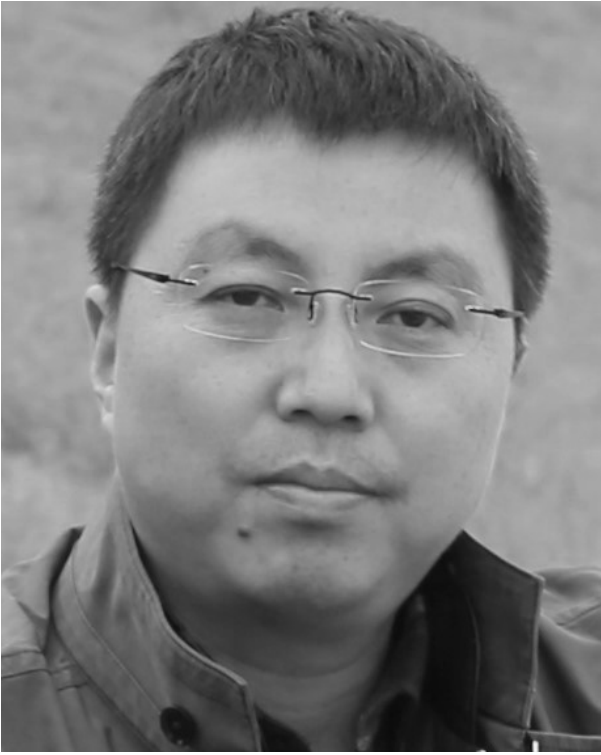}}]{Weidong Jiang}
was born in Chongqing,
China, in 1968. He received the B.S. degree
in communication engineering and the Ph.D.
degree in electronic science and technology
from the National University of Defense Technology (NUDT), China, in 1991 and 2001,
respectively.
He is currently a Professor with NUDT. His
current research interests include multiple-input
multiple-output radar signal processing and radar
system technology.
\end{IEEEbiography}
\vspace{-1cm}
\begin{IEEEbiography}[{\includegraphics[width=1in,height=1.25in,clip,keepaspectratio]{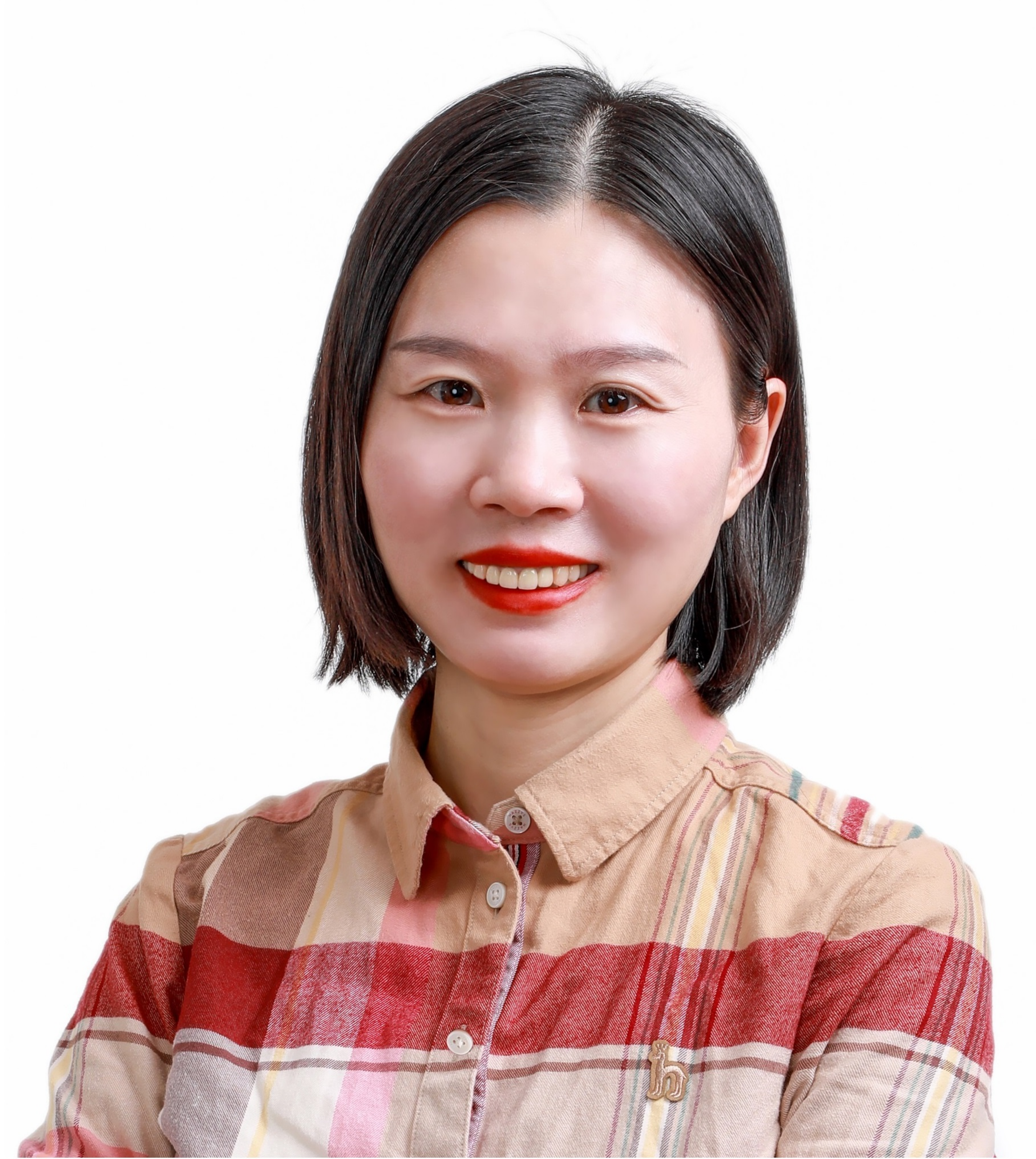}}]{Li Liu} received her Ph.D. degree from the National University
of Defense Technology (NUDT), China, in 2012. During her PhD
study, she spent two years as a Visiting
Student at the University of Waterloo. From 2015 to 2016, she spent ten months visiting
the Multimedia Laboratory at the Chinese University of
Hong Kong. From 2016 to 2018, she was a senior researcher of the CMVS at the University of Oulu, Finland. Dr. Liu
served as a cochair of many International Workshops along with major venues like CVPR and ICCV.
 She served as the leading guest editor of the special issues for IEEE TPAMI and
IJCV. She also served as an Area Chair for several respected international conferences. She currently serves as an Associate Editor for IEEE TGRS, IEEE TCSVT, and Pattern Recognition. Her research interests include computer vision, pattern recognition, and machine learning.
Her papers currently have 9000+ citations according to Google Scholar.
\end{IEEEbiography}
\vspace{-1cm}
\begin{IEEEbiography}[{\includegraphics[width=1in,height=1.25in,clip,keepaspectratio]{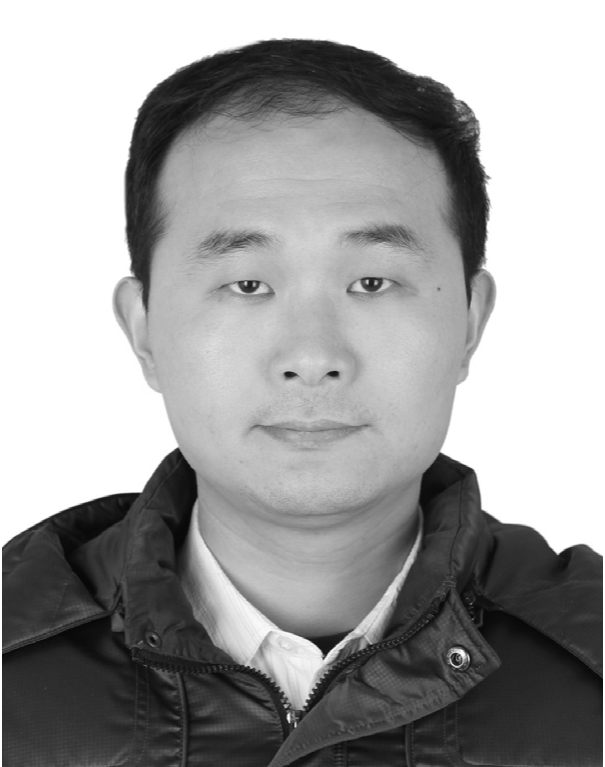}}]{Kai Huo}
was born in Chongqing,
China, in 1968. He received the B.S. degree
in communication engineering and the Ph.D.
degree in electronic science and technology
from the National University of Defense Technology (NUDT), China, in 1991 and 2001,
respectively.
He is currently a Professor with NUDT. His
current research interests include multiple-input
multiple-output radar signal processing and radar
system technology.
\end{IEEEbiography}





\end{document}